\documentclass{article}

\usepackage{microtype}
\usepackage{graphicx}
\usepackage{adjustbox}
\usepackage{booktabs}
\usepackage[table]{xcolor}

\usepackage{hyperref}

\usepackage[preprint]{icml2026}

\usepackage{amsmath,amssymb}
\usepackage{amsthm}
\usepackage[nameinlink,noabbrev]{cleveref}

\icmltitlerunning{Diagnosing Dynamic Instability in LLM Reasoning}

\newcommand{\It}{I_t}
\newcommand{\temp}{\tau}
\newcommand{\taumix}{\tau_{\mathrm{mix}}}

\newcommand{\QwenTwoFiveFull}{\texorpdfstring{Qwen2.5-1.5B-Instruct}{Qwen2.5-1.5B-Instruct}}

\newcommand{\LlamaFull}{\texorpdfstring{Llama-3.2-1B-Instruct}{Llama-3.2-1B-Instruct}}

\newcommand{\LlamaThreeBFull}{\texorpdfstring{Llama-3.2-3B-Instruct}{Llama-3.2-3B-Instruct}}

\newcommand{\LlamaEightBFull}{\texorpdfstring{Llama-3.1-8B-Instruct}{Llama-3.1-8B-Instruct}}

\newcommand{\QwenHalfBInst}{\texorpdfstring{Qwen2.5-0.5B-Inst}{Qwen2.5-0.5B-Inst}}
\newcommand{\QwenTwoFiveInst}{\texorpdfstring{Qwen2.5-1.5B-Inst}{Qwen2.5-1.5B-Inst}}
\newcommand{\QwenTwoFiveSevenBInst}{\texorpdfstring{Qwen2.5-7B-Inst}{Qwen2.5-7B-Inst}}
\newcommand{\LlamaOneBInst}{\texorpdfstring{Llama-3.2-1B-Inst}{Llama-3.2-1B-Inst}}
\newcommand{\LlamaThreeBInst}{\texorpdfstring{Llama-3.2-3B-Inst}{Llama-3.2-3B-Inst}}
\newcommand{\LlamaEightBInst}{\texorpdfstring{Llama-3.1-8B-Inst}{Llama-3.1-8B-Inst}}

\newtheorem{definition}{Definition}
\newtheorem{lemma}{Lemma}
\newtheorem{proposition}{Proposition}
\newtheorem{theorem}{Theorem}

\begin{document}

\twocolumn[
  \icmltitle{``I May Not Have Articulated Myself Clearly'': Diagnosing Dynamic Instability in LLM Reasoning at Inference Time}

  \begin{icmlauthorlist}
    \icmlauthor{Jinkun Chen}{dal}
    \icmlauthor{Fengxiang Cheng}{uva,tsinghua}
    \icmlauthor{Sijia Han}{meta}
    \icmlauthor{Vlado Keselj}{dal}
  \end{icmlauthorlist}

  \icmlaffiliation{dal}{Dalhousie University}
  \icmlaffiliation{uva}{University of Amsterdam}
  \icmlaffiliation{tsinghua}{Tsinghua University}
  \icmlaffiliation{meta}{Meta}

  \icmlcorrespondingauthor{Jinkun Chen}{jinkun.chen@dal.ca}

  \icmlkeywords{Large language models, reasoning, inference-time diagnostics}

  \vskip 0.3in
]

\printAffiliationsAndNotice{}  

\begin{abstract}
Reasoning failures in large language models (LLMs) are typically measured only at the end of a
generation, yet many failures manifest as a process-level breakdown: the model ``loses the thread''
mid-reasoning. We study whether such breakdowns are detectable from inference-time observables
available in standard APIs (token log probabilities), without any training or fine-tuning.
We define a simple instability signal that combines consecutive-step distributional shift (JSD) and
uncertainty (entropy), summarize each trace by its peak instability strength, and show that this
signal reliably predicts failure.
Across GSM8K and HotpotQA, instability strength predicts wrong answers with above-chance AUC and
yields monotonic bucket-level accuracy decline at scale across model sizes.
Crucially, we show that instability is not uniformly harmful: early instability can reflect
subsequent stabilization and a correct final answer (\emph{corrective instability}), whereas late
instability is more often followed by failure (\emph{destructive instability}), even at comparable
peak magnitudes, indicating that recoverability depends not only on how strongly the distribution
changes but also on when such changes occur relative to the remaining decoding horizon.
The method is model-agnostic, training-free, and reproducible, and is presented as a diagnostic
lens rather than a corrective or control mechanism.
\end{abstract}

\begin{figure}[t]
\centering
\includegraphics[width=\linewidth]{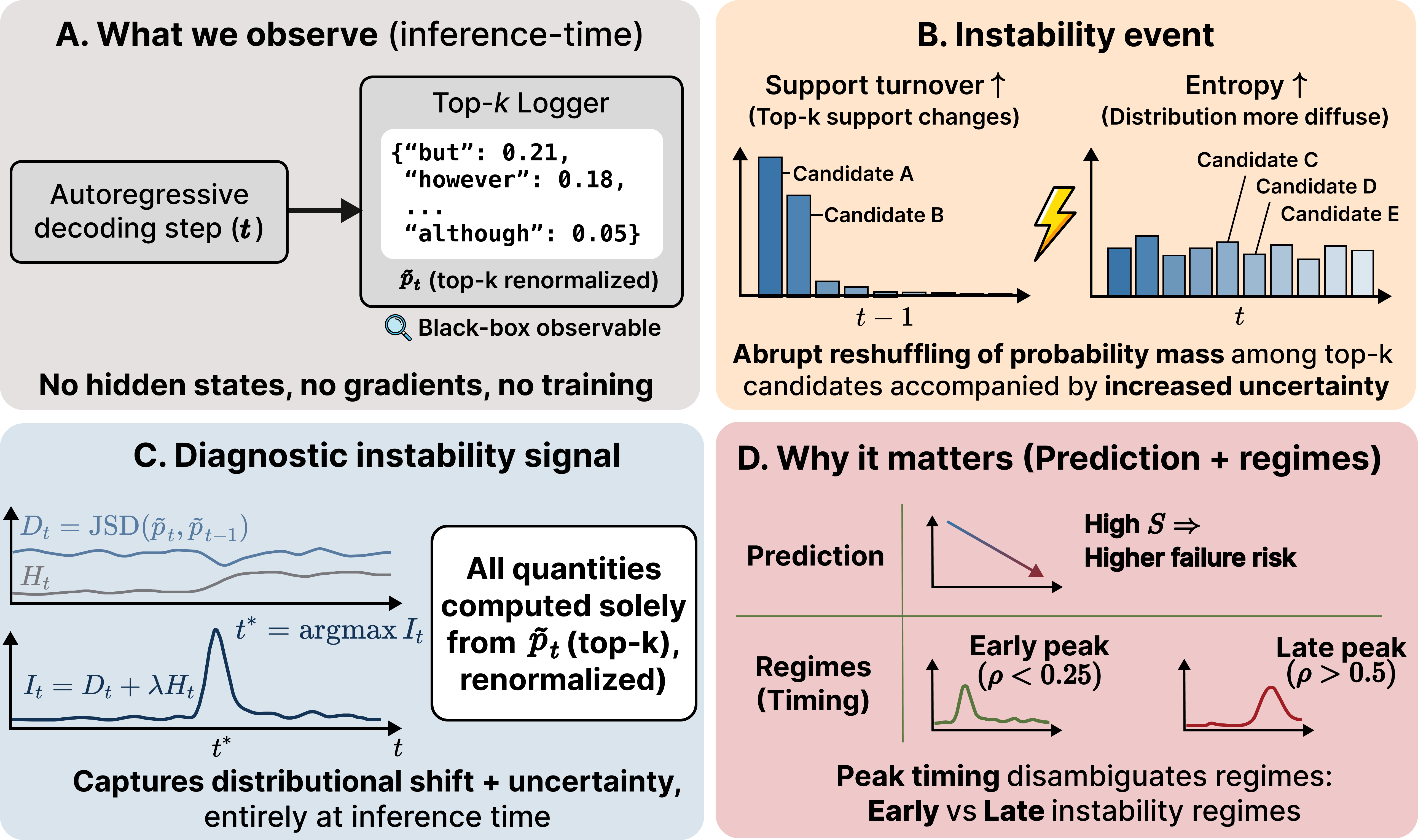}
\caption{Inference-time dynamic instability as a diagnostic signal.
Conceptual overview of an inference-time diagnostic for dynamic instability in large language model reasoning.
\textbf{A}: At each decoding step $t$, we observe only the renormalized top-$k$ next-token distribution $\tilde{p}_t$, which is accessible as a black-box inference-time signal; no hidden states, gradients, or training access are required.
\textbf{B}: An instability event is characterized by an abrupt reshuffling of probability mass among high-probability candidates, accompanied by increased uncertainty (higher entropy) between consecutive steps.
\textbf{C}: We define a diagnostic instability signal $\It = D_t + \lambda H_t$, combining distributional shift $D_t = \mathrm{JSD}(\tilde{p}_t,\tilde{p}_{t-1})$ and uncertainty $H_t$. All quantities are computed solely from $\tilde{p}_t$ at inference time, with overall instability strength summarized as $S = \max_t \It$.
\textbf{D}: Instability strength $S$ is associated with increased failure risk, while the peak location (relative position $\rho$) provides complementary information, separating earlier (more recoverable/corrective) from later (less recoverable/destructive) instability episodes.}%
\label{fig:instability_overview}
\end{figure}

\section{Introduction}
LLM reasoning failures are typically evaluated as end-point errors, yet the generation process
is a temporal dynamical system whose internal evolution can become unstable. We study whether
inference-time signals can distinguish a class of failures driven by dynamic instability, rather
than by static knowledge limitations. Our goal is diagnostic rather than corrective: identify
process-level indicators that reliably predict failure, without modifying the model.
\paragraph{Motivation.}
In many applications, the relevant question is not only \emph{whether} an answer is correct, but
\emph{when} the generation begins to break down and whether it is likely to recover.
Final-answer accuracy is retrospective, and static confidence proxies can miss intra-trajectory
transitions; conversely, multi-sample consistency methods require multiple runs.
We therefore ask a minimal inference-time question: can a \emph{single} decoding trace reveal an
observable signature of unstable reasoning using only token probabilities?
\Cref{fig:instability_overview} provides a conceptual overview of our inference-time
diagnostic, illustrating what is observable at decoding time, how instability manifests in token
distributions, and why instability strength and timing are predictive of reasoning failure.
Unlike accuracy-oriented interventions, our analysis does not aim to improve correctness, but to
characterize when and how reasoning processes become unstable.
We focus on detecting and characterizing instability, not on correcting or controlling it.

To avoid confusion with related notions, our instability signal is not a confidence score, a
calibration method, a hallucination detector, or a verifier. It targets trajectory-level
dynamical behavior rather than outcome-level uncertainty: two traces with similar average
entropy can exhibit very different instability profiles, and only the latter reflects temporal
disruption during reasoning.

Why can the next-token distribution represent reasoning stability? Autoregressive decoding induces
a closed-loop state evolution in which the internal state updates based on previously generated
tokens, yet the model's token distribution is the only black-box observable of this evolving
state. Small perturbations may therefore be amplified over time through this feedback loop.
Formally, letting $h_t$ denote the internal state and $x_t$ the emitted token,
\begin{equation}
  h_{t+1}=f(h_t, x_t),\qquad p_t=g(h_t),
\end{equation}
where $p_t$ is the full next-token distribution; in black-box settings we observe $\tilde{p}_t$,
its top-$k$ truncation and renormalization. We therefore study \emph{intra-trajectory temporal
instability}: abrupt changes in $p_t$ (or $\tilde{p}_t$ when truncated) over time \emph{within a single
trace}. This differs from self-consistency and other sampling-based methods, which analyze
\emph{inter-trajectory} variability across multiple samples. It also differs from static uncertainty:
we do not treat entropy as a standalone confidence score, but combine entropy with
consecutive-step distributional shifts to quantify localized regime shifts during a single
generation.

Prior work shows that LLMs often fail to identify their own reasoning errors without explicit
localization, motivating process-level diagnostics of reasoning traces~\cite{tyen2023reasoning}.
Recent methods propose temporal signals for reasoning process error identification and
visualization of reasoning paths, while uncertainty analyses highlight the limits of explanation
faithfulness~\cite{guo2025temporal,li2025reasongraph,da2025uncertainty}. We complement these
directions with an inference-time instability signal defined directly on token distributions.
\Cref{fig:timing_regimes} previews our central observation: instability strength alone is
insufficient; its timing determines whether reasoning recovers or collapses.

\begin{figure}[t]
\centering
\includegraphics[width=\linewidth]{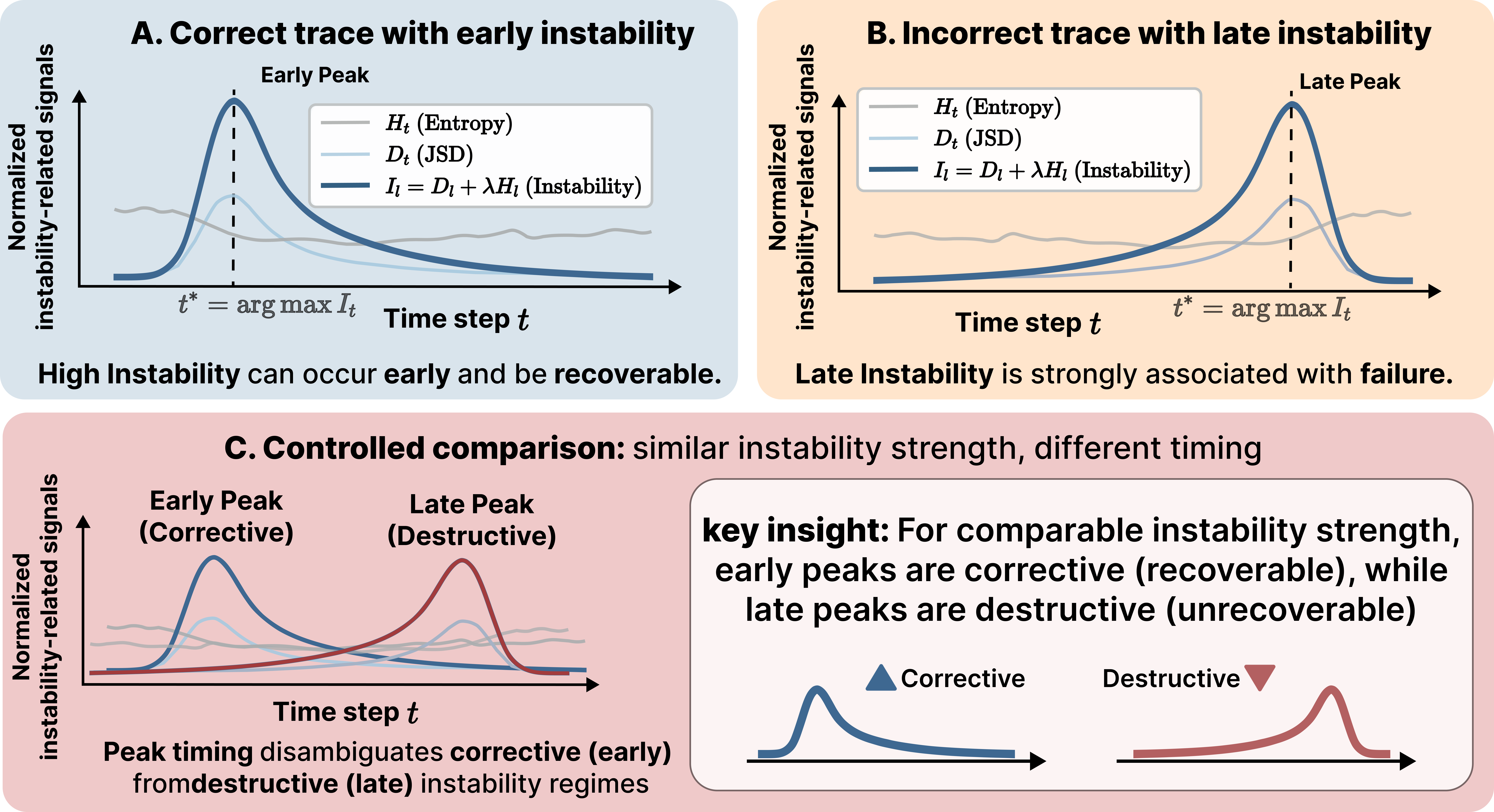}
\caption{Timing-dependent regimes of inference-time instability.
Representative decoding traces illustrating how the \emph{timing} of instability, rather than its
magnitude alone, is associated with different reasoning outcomes.
\textbf{A}: A representative trace with an early instability peak (\textbf{recoverable / corrective}). Although the
instability signal $\It$ reaches a high value, it occurs sufficiently early in decoding for the model
to recover, resulting in a correct final answer.
\textbf{B}: A representative trace with a late instability peak (\textbf{unrecoverable / destructive}). Despite
comparable instability strength, the peak occurs near the end of decoding, leaving limited
opportunity for recovery and resulting in an incorrect final answer.
\textbf{C}: A controlled comparison with instability signals normalized to comparable peak strength,
demonstrating that magnitude alone is insufficient: similar instability strength can correspond to
different outcomes depending on peak timing. All curves show representative (not averaged) decoding
trajectories, with instability defined as $\It = D_t + \lambda H_t$.}%
\label{fig:timing_regimes}
\end{figure}

Our contributions are:
\begin{enumerate}
  \item \textbf{Inference-time instability signal.}
  We define a training-free instability signal $\It = D_t + \lambda H_t$ that combines
  consecutive-step distributional shift and uncertainty, computed solely from token
  log probabilities.
  \item \textbf{Single-trace failure predictability.}
  Instability strength predicts reasoning failure from a \emph{single} decoding trace,
  achieving AUC 0.66--0.74 on GSM8K and remaining predictive at scale on the full GSM8K test
  set and the HotpotQA distractor validation split.
  \item \textbf{Timing-dependent diagnostic refinement.}
  We distinguish \emph{destructive} from \emph{corrective} instability via peak timing,
  showing that instability magnitude alone is insufficient and that recoverability depends
  on when instability occurs.
\end{enumerate}


\paragraph{Scope.}
We test a single claim: instability strength increases with failure rate under both deterministic and stochastic decoding.
We prioritize cross-family reproducibility (Llama/Qwen) across matched size scales (0.5B–8B) for diagnosis rather than exhaustive benchmarking.

\section{Related Work}
Prior work related to our diagnostic can be broadly grouped into four threads:
(i) monitoring and error localization of reasoning trajectories,
(ii) prompting, sampling, and lightweight adaptation for improving reasoning,
(iii) uncertainty and confidence estimation for LLM outputs,
and (iv) adversarial and disruptive perturbations that probe sensitivity of decoding dynamics.
In contrast, we study \emph{intra-trajectory} temporal instability from black-box next-token distributions,
without retraining, additional models, or multi-sample aggregation.

\paragraph{Process monitoring and error localization.}
Reasoning error detection and monitoring have been explored through explicit error localization
and temporal consistency signals~\cite{tyen2023reasoning,guo2025temporal}. Visualization and
structural analysis of reasoning paths provide complementary tools for interpreting failures
and diagnosing process breakdowns~\cite{li2025reasongraph,da2025uncertainty}. Process supervision
and verifier-style training emphasize stepwise correctness but require additional supervision or
fine-tuning~\cite{lightman2023verify}; recent work on process reward models (PRMs) explores
data-efficient stepwise verification via chain-of-thought verifiers~\cite{chen2025thinkprm} and
studies best practices for PRM development~\cite{wang2025lessons_prm}.
Relatedly, \citet{yu2025confidence_over_time} propose viewing confidence as a temporal signal
and use Signal Temporal Logic (STL) to mine temporal patterns that distinguish correct from
incorrect LLM reasoning traces. While their method requires multi-sample generation and STL
mining, our approach operates on a single trace and defines instability directly from
consecutive-step distributional changes.
Our approach is fully inference-time and training-free, and uses only token-distribution observables
available during decoding.

\paragraph{Prompting, sampling, and lightweight adaptation.}
Chain-of-thought prompting can elicit multi-step reasoning~\cite{wei2022cot}, and sampling-based
aggregation such as self-consistency improves accuracy by marginalizing over multiple traces~%
\cite{wang2022selfconsistency}. ReAct-style prompting interleaves reasoning and acting, but can
be brittle in sequential settings when intermediate steps drift~\cite{yao2022react}. Beyond
prompting, parameter-efficient adaptation methods such as prefix-tuning~\cite{li2021prefixtuning},
prompt tuning~\cite{lester2021prompttuning}, and LoRA~\cite{hu2021lora} modulate model behavior
with minimal parameter updates; these methods highlight that small changes in prompt/context can
induce large changes in token-level dynamics. Our instability signal is complementary: rather
than changing the model, we quantify distributional drift and uncertainty along the decoding
trajectory.
Related work also studies prompt sensitivity directly by measuring and manipulating prompt
influence on model outputs~\cite{feng2024promptinfluence}, which supports the broader view that
token-level dynamics can be highly responsive to small input perturbations.
In short, self-consistency and related approaches are, by design, \emph{inter-trajectory} techniques,
whereas we diagnose \emph{intra-trajectory} temporal stability for a single run.

\paragraph{Uncertainty and confidence.}
Black-box self-checking approaches target hallucination detection via consistency signals~%
\cite{manakul2023selfcheckgpt}. Confidence calibration remains an active area, with
multicalibration methods proposed to improve reliability of confidence scores~%
\cite{detommaso2024multicalibration}. Prior work also studies the extent to which model
probabilities reflect epistemic uncertainty and correlate with correctness~\cite{kadavath2022know}.
From an information-theoretic perspective, the next-token entropy corresponds to the conditional
entropy of the next symbol given the prefix in an autoregressive process~\cite{shannon1948mathematical,cover2006elements}.
Sequential inference can be viewed as repeated belief updates that ideally stabilize as evidence
accumulates~\cite{mackay2003information}. Prior work leverages predictive uncertainty (e.g., entropy)
as a mostly static correctness proxy~\cite{guo2017calibration,malinin2018prior_networks}; we extend
this view by analyzing the \emph{temporal evolution} of uncertainty and distributional change along a
single reasoning trajectory, without claiming that information theory alone proves causal failure.

Semantic-level uncertainty measures such as \emph{semantic entropy} (uncertainty computed at the
level of meaning rather than token sequences) have been proposed as principled hallucination
detectors~\cite{farquhar2024semantic_entropy}. Extensions include semantic entropy probes (SEPs)
that extract semantic uncertainty from hidden states~\cite{kossen2024semantic_entropy_probes},
kernel-based fine-grained uncertainty quantification~\cite{nikitin2024kernel_language_entropy},
efficient Bayesian estimation methods~\cite{becker2025bayesian_semantic_entropy}, and
\emph{semantic energy} operating on logits to capture uncertainty in scenarios where semantic entropy
fails~\cite{jiang2025semantic_energy}. Token-level hallucination detection via variance and
probability signals has also been explored~\cite{chen2024token_probability_hallucination,kumar2025token_level_hallucinations_variance,zhang2025adaptive_token_selection}.
Recent evaluations and frameworks compare uncertainty estimation methods and emphasize remaining
challenges~\cite{bakman2024mars,zhang2025llms_estimate_uncertainty,li2025harmonized_uncertainty}.

These methods predominantly target \emph{static} uncertainty at the output or semantic level,
often requiring multiple samples or hidden-state access. In contrast, our focus is specifically
on \emph{temporal} instability within a single decoding trace: we track how the next-token
distribution changes over time, which captures a distinct failure mode (dynamic regime shifts
during reasoning) that static uncertainty measures may miss.

Finally, work on self-correction reaches mixed conclusions: some studies argue that
self-correction is not an inherent capability and depends on scaffolding~%
\cite{huang2023cannot_self_correct_reasoning_yet,liu2024self_correction_not_innate}, while others
find evidence of intrinsic self-correction under certain settings~\cite{liu2024intrinsic_self_correction}.
Recent work trains models to self-correct via reinforcement learning~\cite{kumar2024selfcorrect_rl},
and studies of internal representations show that LLM hidden states encode information predictive
of chain-of-thought success~\cite{halawi2025cot_representations}.
Our ``corrective vs.\ destructive'' distinction is compatible with this picture: high-instability
episodes may correspond either to beneficial revision or to divergence toward failure.
Our method differs in focus: we do not attempt to produce a calibrated confidence score; instead
we characterize \emph{dynamical} instability via temporal changes in token distributions.

\paragraph{Adversarial and disruptive perturbations.}
Recent work shows that stepwise interventions can deliberately disrupt intermediate reasoning and
induce downstream errors~\cite{peng2024stepwise_reasoning_error_disruption_attack}. This supports
our framing that process-level dynamics, not only final answers, constitute a meaningful object of
study, and motivates robust, inference-time diagnostics of instability.
Unlike intervention work (editing/steering), this paper is diagnosis-only: we define and validate
an instability signal without applying stabilization operators.

\section{Method}
\subsection{Inference-Time Observables}
As illustrated in \cref{fig:instability_overview} (\textbf{A, B, and C}), our instability signal is
computed entirely from logged top-$k$ next-token distributions at inference time, without access
to hidden states or full logits.
At each generation step $t$, the model produces a predictive distribution $p_t$ over tokens.
In our implementation, we only observe top-$k$ log probabilities. Let $\mathcal{V}_t$ be the
top-$k$ token set at step $t$ and define the truncated, renormalized distribution
$\tilde{p}_t$ supported on $\mathcal{V}_t$ by setting $\tilde{p}_t(x)=0$ for $x\notin\mathcal{V}_t$
and renormalizing over $\mathcal{V}_t$.
When computing divergences between consecutive steps, we treat $\tilde{p}_t$ and $\tilde{p}_{t-1}$
as distributions over the union support $\mathcal{V}_t\cup\mathcal{V}_{t-1}$ by zero-padding outside
each step's logged set (equivalently, we compute JSD over the union key set).
Unless otherwise noted, we compute all observables from $\tilde{p}_t$; when full vocabulary logits
are available, we compute the same quantities from $p_t$ without truncation.
Our primary objective is a black-box diagnostic based only on log probabilities; therefore we treat
$\tilde{p}_t$ as the operative observable and use quantities computed from full vocabulary logits
only as a validation control when available.

\smallskip
We compute (using natural logarithms):
\begin{align}
  H_t &= -\sum_x \tilde{p}_t(x)\log \tilde{p}_t(x), \\
  D_t &= \mathrm{JSD}(\tilde{p}_t,\tilde{p}_{t-1}),
\end{align}
where $\mathrm{JSD}(P,Q)=\tfrac{1}{2}\mathrm{KL}(P\|M)+\tfrac{1}{2}\mathrm{KL}(Q\|M)$ and
$M=\tfrac{1}{2}(P+Q)$.
With natural logarithms, $\mathrm{JSD}(P,Q)\in[0,\log 2]$.
In all experiments we compute logarithms only on positive probabilities.
We compute KL terms only where the numerator probability is positive (as in the standard definition
of KL), noting that in JSD the mixture $M=\tfrac{1}{2}(P+Q)$ is strictly positive wherever $P$ or
$Q$ is positive.
For numerical stability, we optionally add a small $\epsilon$ (e.g., $10^{-12}$) inside logarithms;
we verified this does not change results.

\smallskip
\subsection{Instability Signal and Strength}
We define instability per-step:
\begin{equation}
\It = D_t + \lambda H_t,
\end{equation}
and note that both correct and incorrect traces can exhibit high instability
(\cref{fig:timing_regimes}, \textbf{A, B, and C}), motivating a focus on temporal structure beyond
magnitude.
We set $\lambda=1$ as a fixed default (equal-weighting in nats) across all tasks, models, and decoding
settings, and do not tune $\lambda$ to avoid optimistic bias from post-hoc hyperparameter selection.
We report sensitivity via a minimal ablation comparing $\lambda=0$ (JSD only) and $\lambda=1$
(\cref{fig:lambda_ablation}), and additional baselines in Appendix~\ref{sec:appendix-controls}.
For each trace, the instability strength is $S = \max_t \It$ to capture brief spikes.
To control for trace-length confounding (longer traces have more opportunities for spikes), we also
report early-window maxima $S_w = \max_{t \le w} \It$ for pre-specified windows $w \in \{10,20,50,100\}$,
highlighting $S_{50}$ as a fixed-window control (\cref{sec:appendix-early-window}).
\paragraph{Implementation and complexity.}
Given per-step top-$k$ log probabilities, we compute $D_t$ by evaluating JSD on the union support
$\mathcal{V}_t\cup\mathcal{V}_{t-1}$ via zero-padding outside each step's logged set, and compute $H_t$
on the renormalized $\tilde{p}_t$.
This yields an $O(Tk)$ per-trace computation (streamable with $O(k)$ working memory), making the diagnostic
practical at scale.
For determinism, when taking $\arg\max$ over time we break ties by the smallest $t$ (as specified for
$t^\star$ and $t^\star_{50}$).

\smallskip
\subsection{Theoretical Intuition}
\label{sec:theory}
Autoregressive decoding is a closed-loop dynamical system: as the model generates tokens, it updates
an unobserved internal state that determines the next-token distribution. Abrupt step-to-step shifts
in the distribution are therefore natural observable signatures of internal regime changes.
Our instability signal combines realized distributional change ($D_t$ via consecutive-step JSD) with
uncertainty ($H_t$ via entropy) to capture both \emph{route switching} and \emph{decision fragility}.
For space, we defer the full theoretical statements (including the recoverability/timing formalization)
to Appendix~\ref{sec:appendix-theory}.

\paragraph{Recoverability under a finite decoding horizon (intuition).}
We view decoding as a finite-horizon process in which entering a stable continuation regime requires
a minimum amount of time. When instability occurs early, there remains sufficient decoding budget
for the model to re-stabilize and converge to a coherent trajectory. In contrast, when instability
peaks late, the remaining budget becomes insufficient for recovery, even if the magnitude of the
instability is comparable.

This perspective highlights that recoverability depends not only on how strongly the model's
distribution changes, but also on when such changes occur relative to the remaining decoding
horizon; we emphasize that this argument is explanatory rather than a formal proof
(Appendix~\ref{sec:appendix-theory}).

\newcommand{\FullTheoryDetails}{%
\subsection{Theoretical Analysis: Dynamic Instability in Autoregressive Reasoning}
\label{sec:appendix-theory-full}
We give a compact theoretical framing that connects internal trajectory instability to observable
changes in next-token distributions. The results below are intentionally limited in scope: they
do not assert that instability causes failure, only that large observable distributional changes
are consistent with substantial internal state changes, and that timing effects admit a simple
recoverability interpretation. Proofs are provided in \cref{sec:appendix-proofs}.
These statements are not used to estimate hidden-state distances or to claim causal mechanisms.
The goal is not to model Transformers in full detail, but to formalize two core ideas: observable
shifts in next-token distributions are mathematically linked to changes in internal state under
mild local conditions, and the timing of such shifts matters for recoverability under a finite
decoding budget.

Where possible we state explicit constants, but our use of these results is qualitative: none of
the bounds are used to set thresholds or guarantee prediction performance, and trajectory-dependent
constants (e.g., curvature terms) may vary widely or degenerate in certain decoding regimes.

\paragraph{Autoregressive reasoning as a dynamical system.}
We model autoregressive inference as a discrete-time closed-loop system. Let $h_t \in \mathcal{H}
\subseteq \mathbb{R}^d$ denote an unobserved internal state at generation step $t$ and let $x_t$ be
the emitted token. The state evolves as
\begin{equation}
  h_{t+1} = f(h_t, x_t).
\end{equation}
The next-token distribution is obtained from logits $z_t$ via an output map and softmax:
\begin{equation}
  z_t = W h_t + b, \qquad p_t = \mathrm{softmax}(z_t),
\end{equation}
where $W \in \mathbb{R}^{V \times d}$ and $V$ is the vocabulary size. In black-box settings we only
observe $\tilde{p}_t$, a truncated and renormalized top-$k$ approximation of $p_t$, and we therefore
compute observables from $\tilde{p}_t$ throughout.

\paragraph{Assumptions.}
We adopt the following local technical assumptions (standard in local stability analyses) only to
make the bounds below well-defined.
\begin{itemize}
  \item \textbf{(A1) Local smoothness.} The state transition $f(h,x)$ is locally differentiable with
  respect to $h$ along the trajectories considered.
  \item \textbf{(A2) Observable gain of the output map.} Let $\Pi := I - \tfrac{1}{V}\mathbf{1}\mathbf{1}^\top$
  be the projection onto $\mathbf{1}^\perp$ (removing the softmax shift invariance). There exists a
  local constant $\sigma_W > 0$ along the trajectories considered such that
  $\lVert \Pi W u \rVert_2 \ge \sigma_W \lVert u \rVert_2$ for all relevant state differences $u$.
\end{itemize}
These assumptions do not guarantee a global stability property for the model. They only support a
local connection between changes in internal state and changes in observable token distributions.
We emphasize that (A2) is a strong, local identifiability-type condition: it may fail globally and
need not hold uniformly over all directions $u$ in practice, but is sufficient to formalize a
stepwise link between hidden-state changes and projected logit changes.

\paragraph{Definitions: finite-horizon stability and local expansion.}
\begin{definition}[Finite-horizon stability]
Given a trajectory $(h_t)_{t=0}^T$, we say it is stable over horizon $T$ if for any sufficiently
small perturbation $\delta h_0$ there exists $C>0$ such that
\begin{equation}
  \sup_{t \le T} \lVert \delta h_t \rVert_2 \le C \lVert \delta h_0 \rVert_2.
\end{equation}
Otherwise the trajectory is dynamically unstable over horizon $T$.
\end{definition}

\begin{definition}[Local expansion rate]
Let $J_f(h_t,x_t) := \tfrac{\partial f}{\partial h}(h_t,x_t)$ and define
\begin{equation}
  \alpha_t := \log \lVert J_f(h_t,x_t) \rVert_{\mathrm{op}}.
\end{equation}
When $\sum_{t=t_0}^{t_1} \alpha_t$ is large and positive, small perturbations expand rapidly over
the interval, indicating an internally unstable segment.
\end{definition}
These quantities are not directly observable in our black-box setting. We use them only to
formalize what it means for an internal trajectory to be unstable.

\paragraph{From internal instability to observable distributional change.}
\begin{lemma}[Hidden-state change induces projected logit change]
\label{lem:state-logit}
Under (A2), for consecutive steps,
\begin{equation}
  \lVert z_t - z_{t-1} \rVert_{\perp} := \lVert \Pi(z_t - z_{t-1}) \rVert_2
  \ge \sigma_W \lVert h_t - h_{t-1} \rVert_2.
\end{equation}
\end{lemma}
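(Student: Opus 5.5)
The plan is a direct computation. The only ingredients are the affine form of the output map and assumption (A2), applied to the consecutive-step state difference. No dynamics, so neither (A1) nor the local expansion rate $\alpha_t$, are needed here.

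\textbf{Step 1: reduce the logit difference to a state difference.} From $z_t = W h_t + b$ and $z_{t-1} = W h_{t-1} + b$, the bias cancels, so
\begin{equation}
  z_t - z_{t-1} = W(h_t - h_{t-1}) = W u_t, \qquad u_t := h_t - h_{t-1}.
\end{equation}
Applying the projection $\Pi = I - \tfrac{1}{V}\mathbf{1}\mathbf{1}^\top$, which is linear, gives $\Pi(z_t - z_{t-1}) = \Pi W u_t$.

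\textbf{Step 2: invoke (A2).} Since $u_t$ is a state difference along the trajectory considered, it is a ``relevant state difference'' in the sense of (A2). Therefore
\begin{equation}
  \lVert \Pi(z_t - z_{t-1}) \rVert_2 = \lVert \Pi W u_t \rVert_2 \ge \sigma_W \lVert u_t \rVert_2,
\end{equation}
which is exactly the claimed inequality.

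\textbf{Step 3: justify using the projected norm.} It is worth remarking why $\lVert\cdot\rVert_\perp$ is the right quantity. Softmax is invariant under $z \mapsto z + c\mathbf{1}$, and $\Pi$ removes precisely this component. Consequently the left-hand side depends only on the distributions $p_t, p_{t-1}$ through their centered log-probabilities. This is what later lets the bound be linked to observable distributional change.

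\textbf{Main obstacle.} There is no real mathematical obstacle; the argument is two lines. The only point needing care is interpretive: one must check that $u_t = h_t - h_{t-1}$ lies in the set of directions over which (A2) is assumed to hold. I would address this by stating explicitly that (A2) is taken to cover consecutive-step differences along the trajectory, and noting that the lemma inherits whatever locality (A2) carries.
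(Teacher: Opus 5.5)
Your argument is correct and is exactly the intended one: the bias cancels, so $\Pi(z_t - z_{t-1}) = \Pi W(h_t - h_{t-1})$, and (A2) applied with $u = h_t - h_{t-1}$ gives the bound (the paper's proofs appendix omits a written proof of this lemma and treats it as immediate from $z_t = W h_t + b$ and (A2)). Your caveat that (A2) must be read as covering consecutive-step differences along the trajectory is the right one, and it is how the lemma is used in Theorem~\ref{thm:observable-lower-bound}.
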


\begin{lemma}[Observable distributional change reflects logit change]
\label{lem:logit-jsd}
Let $p_t=\mathrm{softmax}(z_t)$ and $p_{t-1}=\mathrm{softmax}(z_{t-1})$ be consecutive next-token
distributions with logits $z_t,z_{t-1}\in\mathbb{R}^V$. Let $\Pi$ denote the projection onto
$\mathbf{1}^\perp$. Define the trajectory-dependent curvature constant
\begin{equation}
  \kappa_{t,\mathrm{traj}}
  :=
  \inf_{s\in[0,1]}
  \lambda_{\min}\!\Big(J(p(s))\Big|_{\mathbf{1}^\perp}\Big),
\end{equation}
where $z(s)=z_{t-1}+s(z_t-z_{t-1})$, $p(s)=\mathrm{softmax}(z(s))$, and $J(p)=\mathrm{diag}(p)-pp^\top$
is the softmax Jacobian. This quantity captures the minimum local curvature of the softmax map
along the segment between $z_{t-1}$ and $z_t$. Then
\begin{equation}
  \mathrm{JSD}(p_t,p_{t-1})
  \ge \frac{\kappa_{t,\mathrm{traj}}^2}{8}\, \lVert z_t - z_{t-1} \rVert_{\perp}^2.
\end{equation}
\end{lemma}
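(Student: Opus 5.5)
The plan is to chain three elementary inequalities: a Pinsker-type lower bound on JSD in terms of $\ell_1$ distance, the comparison $\lVert\cdot\rVert_1\ge\lVert\cdot\rVert_2$, and an integral (mean-value) representation of $p_t-p_{t-1}$ through the softmax Jacobian along the segment $z(s)$. Throughout, write $\Delta z := z_t - z_{t-1}$ and $M=\tfrac12(p_t+p_{t-1})$.

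\textbf{Step 1: JSD versus $\ell_1$.} Apply Pinsker's inequality $\mathrm{KL}(P\|M)\ge \tfrac12\lVert P-M\rVert_1^2$ (natural logs) to each of the two KL terms in the JSD. Since $p_t-M=\tfrac12(p_t-p_{t-1})$ and $p_{t-1}-M=-\tfrac12(p_t-p_{t-1})$, each KL term is at least $\tfrac18\lVert p_t-p_{t-1}\rVert_1^2$. Averaging the two terms gives
\[
\mathrm{JSD}(p_t,p_{t-1})\ge \tfrac18\lVert p_t-p_{t-1}\rVert_1^2 \ge \tfrac18\lVert p_t-p_{t-1}\rVert_2^2 .
\]
This already produces the constant $1/8$ in the statement.

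\textbf{Step 2: integral representation.} The softmax is smooth with Jacobian $J(p)=\mathrm{diag}(p)-pp^\top$. By the fundamental theorem of calculus along $z(s)$,
\[
p_t-p_{t-1}=\Big(\int_0^1 J(p(s))\,ds\Big)\Delta z .
\]
Because $J(p)\mathbf 1=0$ and $J$ is symmetric, we may replace $\Delta z$ by $\Pi\Delta z$. Moreover, $\mathbf 1^\top J(p)=0$, so $J(p)$ maps $\mathbf 1^\perp$ into itself. Set $\bar J:=\int_0^1 J(p(s))\,ds$. Then $\bar J$ is symmetric positive semidefinite and leaves $\mathbf 1^\perp$ invariant.

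\textbf{Step 3: eigenvalue lower bound (main obstacle).} The key point is to show $\lambda_{\min}(\bar J|_{\mathbf 1^\perp})\ge\kappa_{t,\mathrm{traj}}$. For any unit $u\in\mathbf 1^\perp$,
\[
u^\top\bar J u=\int_0^1 u^\top J(p(s))u\,ds\ge \inf_s\lambda_{\min}\big(J(p(s))|_{\mathbf 1^\perp}\big)=\kappa_{t,\mathrm{traj}},
\]
so the minimum eigenvalue of the average dominates the infimum of the minimum eigenvalues. Since $\bar J|_{\mathbf 1^\perp}$ is symmetric PSD on that subspace, a Rayleigh-quotient lower bound implies $\lVert\bar J v\rVert_2\ge\kappa_{t,\mathrm{traj}}\lVert v\rVert_2$ for all $v\in\mathbf 1^\perp$. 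Applying this with $v=\Pi\Delta z$ gives $\lVert p_t-p_{t-1}\rVert_2\ge\kappa_{t,\mathrm{traj}}\lVert\Pi\Delta z\rVert_2$.

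Measurability and continuity in $s$ are immediate from smoothness of the softmax. The only care needed is to justify the restriction to the invariant subspace via symmetry. I would also remark that if some $p(s)$ has near-zero entries, then $\kappa_{t,\mathrm{traj}}$ may be $0$ and the bound is trivially true.

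\textbf{Combining.} Substituting the Step 3 bound into Step 1 yields
\[
\mathrm{JSD}(p_t,p_{t-1})\ge\frac{\kappa_{t,\mathrm{traj}}^2}{8}\lVert z_t-z_{t-1}\rVert_\perp^2,
\]
as claimed. Composing this with \cref{lem:state-logit} then links JSD to the hidden-state change $\lVert h_t-h_{t-1}\rVert_2$.
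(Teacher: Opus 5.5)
Your proposal is correct and is essentially the paper's proof: Pinsker on each KL term against the midpoint gives $\mathrm{JSD}\ge\tfrac18\lVert p_t-p_{t-1}\rVert_1^2\ge\tfrac18\lVert p_t-p_{t-1}\rVert_2^2$, and then you use the integral representation $p_t-p_{t-1}=\int_0^1 J(p(s))\,\Pi(z_t-z_{t-1})\,ds$. Your eigenvalue bound on the averaged Jacobian $\bar J$ restricted to $\mathbf{1}^\perp$ is the same step as the paper's pairing of $p_t-p_{t-1}$ with $v=\Pi(z_t-z_{t-1})$ followed by Cauchy--Schwarz. (One small aside: for finite logits every $p(s)$ has full support, so $\kappa_{t,\mathrm{traj}}$ is strictly positive, though possibly tiny, rather than possibly $0$.)
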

\noindent\textbf{Remark.} The constant $\kappa_{t,\mathrm{traj}}$ is local and trajectory-dependent.
When the next-token distribution is nearly deterministic (for example near termination), the
softmax Jacobian becomes nearly singular on $\mathbf{1}^\perp$, so $\kappa_{t,\mathrm{traj}}$ can
approach zero and the bound becomes loose. This is consistent with our goal of diagnosing large
transitions during uncertain and decision-critical segments; we do not claim a uniform lower bound
that holds across all decoding regimes.

\begin{proposition}[Top-$k$ analogue (restricted simplex)]
\label{prop:topk_analogue}
Let $\tilde{p}_t$ and $\tilde{p}_{t-1}$ be the renormalized top-$k$ distributions supported on
$\mathcal{V}_t$ and $\mathcal{V}_{t-1}$. Let $\bar{\mathcal{V}}=\mathcal{V}_t\cup\mathcal{V}_{t-1}$
and view $\tilde{p}_t,\tilde{p}_{t-1}$ as distributions on $\bar{\mathcal{V}}$ by zero-padding outside
each step's support. This embedding leaves each step's logged probabilities unchanged and provides
a shared support for divergence computation.
Since any distribution on the restricted simplex with strictly positive coordinates can be written
as a softmax of finite logits (unique up to an additive constant), the statement of
Lemma~\ref{lem:logit-jsd} applies verbatim on the restricted logit space
$\mathbb{R}^{|\bar{\mathcal{V}}|}$, yielding
\begin{equation}
  \mathrm{JSD}(\tilde{p}_t,\tilde{p}_{t-1})
  \ \ge\ \frac{\tilde{\kappa}_{t,\mathrm{traj}}^2}{8}\,
  \big\lVert \tilde{\Pi}_{\bar{\mathcal{V}}}(\tilde{z}_t-\tilde{z}_{t-1})\big\rVert_2^2,
\end{equation}
where $\tilde{\Pi}_{\bar{\mathcal{V}}}$ is the projection onto $\mathbf{1}^\perp$ in
$\mathbb{R}^{|\bar{\mathcal{V}}|}$, $\tilde{z}_t$ denotes logits restricted to $\bar{\mathcal{V}}$
up to an additive constant, and $\tilde{\kappa}_{t,\mathrm{traj}}$ is defined from the softmax Jacobian
on the restricted simplex.
\end{proposition}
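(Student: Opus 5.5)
The plan is to reduce the statement to Lemma~\ref{lem:logit-jsd} by reading the restricted simplex on $\bar{\mathcal{V}}$ as a softmax family in its own right. Set $n:=|\bar{\mathcal{V}}|$. First I will check the claim that zero-padding leaves each step's logged probabilities unchanged. The padded vector $\tilde{p}_t\in\Delta^{n-1}$ agrees with the renormalized top-$k$ vector on $\mathcal{V}_t$ and is $0$ on $\bar{\mathcal{V}}\setminus\mathcal{V}_t$. It still sums to one, so it is a genuine point of the $(n-1)$-simplex. Because JSD is computed coordinatewise with the convention $0\log 0=0$, the value $\mathrm{JSD}(\tilde{p}_t,\tilde{p}_{t-1})$ on $\bar{\mathcal{V}}$ equals the union-key-set JSD defined in the Method section.

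\emph{Step 1 (full-support case).} Suppose every coordinate of $\tilde{p}_t$ and $\tilde{p}_{t-1}$ on $\bar{\mathcal{V}}$ is strictly positive, as happens when $\mathcal{V}_t=\mathcal{V}_{t-1}$. Then I define $\tilde{z}_t:=\log\tilde{p}_t\in\mathbb{R}^n$, and similarly $\tilde{z}_{t-1}$, so that $\tilde{p}_t=\mathrm{softmax}(\tilde{z}_t)$. Any other logit vector for $\tilde{p}_t$ differs from $\tilde{z}_t$ by a multiple of $\mathbf{1}$. This ambiguity is killed by $\tilde{\Pi}_{\bar{\mathcal{V}}}$, so the right-hand side is well defined.

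Lemma~\ref{lem:logit-jsd} was stated for a generic vocabulary of size $V$ and used nothing about the model beyond $p=\mathrm{softmax}(z)$. I will therefore apply it with $V$ replaced by $n$, $z_t$ by $\tilde{z}_t$, $\Pi$ by $\tilde{\Pi}_{\bar{\mathcal{V}}}$, and the curvature constant by $\tilde{\kappa}_{t,\mathrm{traj}}:=\inf_{s\in[0,1]}\lambda_{\min}\big(J(\tilde{p}(s))|_{\mathbf{1}^\perp}\big)$. Here $\tilde{p}(s)=\mathrm{softmax}(\tilde{z}_{t-1}+s(\tilde{z}_t-\tilde{z}_{t-1}))$, and this is exactly the definition the statement intends. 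The inequality then follows verbatim.

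\emph{Step 2 (differing supports, the main obstacle).} If $\mathcal{V}_t\neq\mathcal{V}_{t-1}$, then some padded coordinate is exactly zero. No finite logit vector produces such a point, so the phrase ``strictly positive coordinates'' in the statement fails, and the proposition must be interpreted rather than merely quoted. I would handle this by a limiting argument. Replace each zero by $\varepsilon>0$ and renormalize to get $\tilde{p}^{\varepsilon}_t,\tilde{p}^{\varepsilon}_{t-1}$, apply Step 1 to these, and let $\varepsilon\to0$.

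The left side converges by continuity of JSD on the closed simplex. On the right, the endpoint $\tilde{p}^\varepsilon_t$ has a coordinate of order $\varepsilon$. Since $\lambda_{\min}(J(p)|_{\mathbf{1}^\perp})\le\min_i p_i$ (test with a vector concentrated near that coordinate), the infimum gives $\tilde{\kappa}^{\varepsilon}\le O(\varepsilon)$. Meanwhile $\|\tilde{\Pi}(\tilde{z}^\varepsilon_t-\tilde{z}^\varepsilon_{t-1})\|_2$ grows only like $O(\log(1/\varepsilon))$. Hence the right side tends to $0$, and the inequality holds but is trivial in this regime.

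I would therefore state explicitly that for differing supports we take $\tilde{\kappa}_{t,\mathrm{traj}}:=0$, with the right-hand side read as $0$. This matches the Remark after Lemma~\ref{lem:logit-jsd} that the bound degenerates near the simplex boundary. If this convention seems too weak, the fallback is to state the proposition on the common support $\mathcal{V}_t\cap\mathcal{V}_{t-1}$ after renormalization. The price is controlling how JSD changes under that renormalization, which I expect to be messier and would attempt only if needed.
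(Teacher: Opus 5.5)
Your Step~1 is the paper's argument: the paper gives no separate proof and justifies the proposition only by the inline remark that restricted distributions are softmaxes of finite logits, so Lemma~\ref{lem:logit-jsd} applies verbatim on $\mathbb{R}^{|\bar{\mathcal{V}}|}$.

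Your Step~2 goes beyond the paper, and it is right to. That remark needs strictly positive coordinates, which the zero-padded vectors lack whenever $\mathcal{V}_t\neq\mathcal{V}_{t-1}$. This is the high-turnover regime that the paper itself credits with producing JSD spikes. Your $\varepsilon$-regularization correctly shows that the right-hand side tends to $0$ there. So the bound is vacuous in that case, and the nontrivial content of the proposition is confined to $\mathcal{V}_t=\mathcal{V}_{t-1}$.

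One small slip: the inequality $\lambda_{\min}\big(J(p)|_{\mathbf{1}^\perp}\big)\le\min_i p_i$ is false in general. For $p=(0.1,0.9)$ the restricted eigenvalue is $0.18$. By interlacing for the rank-one downdate $\mathrm{diag}(p)-pp^\top$, it actually lies between the smallest and second-smallest $p_i$. Your own test vector $v\propto e_i-\tfrac{1}{n}\mathbf{1}$ still gives $v^\top J(p)v=\tfrac{n}{n-1}p_i(1-p_i)=O(\varepsilon)$, so the conclusion is unaffected.
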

We emphasize that the existence of such restricted logits is used only to justify the geometry of
the mapping. The analysis does not require access to $\tilde{z}_t$ in practice.

\begin{theorem}[Observable instability lower-bounds internal step change]
\label{thm:observable-lower-bound}
Under (A2), for consecutive steps,
\begin{equation}
  \mathrm{JSD}(p_t,p_{t-1})
  \ge \frac{\kappa_{t,\mathrm{traj}}^2\sigma_W^2}{8}\, \lVert h_t - h_{t-1} \rVert_2^2.
\end{equation}
\end{theorem}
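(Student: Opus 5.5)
The plan is to chain Lemma~\ref{lem:logit-jsd} and Lemma~\ref{lem:state-logit}; no new analysis is needed beyond checking that the constants compose. First, I would apply Lemma~\ref{lem:logit-jsd} to the consecutive logits $z_{t-1},z_t$, obtaining
\begin{equation}
  \mathrm{JSD}(p_t,p_{t-1}) \ge \frac{\kappa_{t,\mathrm{traj}}^2}{8}\,\lVert \Pi(z_t-z_{t-1})\rVert_2^2 .
\end{equation}
Second, I would invoke Lemma~\ref{lem:state-logit}, which under (A2) gives $\lVert \Pi(z_t-z_{t-1})\rVert_2 \ge \sigma_W \lVert h_t-h_{t-1}\rVert_2$. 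This follows because $z_t-z_{t-1}=W(h_t-h_{t-1})$: the bias $b$ cancels, and we take $u=h_t-h_{t-1}$ as a relevant state difference.

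Since both sides of that inequality are nonnegative, squaring preserves the direction. Multiplying by the nonnegative factor $\kappa_{t,\mathrm{traj}}^2/8$ also preserves it. Substituting then yields the claimed bound $\mathrm{JSD}(p_t,p_{t-1}) \ge \tfrac{\kappa_{t,\mathrm{traj}}^2\sigma_W^2}{8}\lVert h_t-h_{t-1}\rVert_2^2$.

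The only points requiring care are bookkeeping. The first is that $\kappa_{t,\mathrm{traj}}\ge 0$. This holds because $J(p)=\mathrm{diag}(p)-pp^\top$ is positive semidefinite, being the covariance of a one-hot vector under $p$. Its restriction to $\mathbf{1}^\perp$ therefore has nonnegative minimal eigenvalue, and squaring causes no sign issue. The second is that the difference $h_t-h_{t-1}$ lies in the set of directions where (A2) is assumed to hold. This is exactly the phrase ``relevant state differences'' in (A2), so I would state explicitly that consecutive-step differences along the trajectory are included.

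I expect no real obstacle, since the theorem is a direct corollary of the two lemmas. If Lemma~\ref{lem:logit-jsd} itself had to be justified, the hard step would be lower-bounding JSD by a quadratic in the logit gap. I would handle that with the Pinsker-type bound $\mathrm{JSD}\ge \tfrac14\lVert p_t-p_{t-1}\rVert_1^2 \ge \tfrac14\lVert p_t-p_{t-1}\rVert_2^2$. I would combine it with the mean-value integral $p_t-p_{t-1}=\int_0^1 J(p(s))(z_t-z_{t-1})\,ds$, where $J(p(s))$ annihilates $\mathbf{1}$ and so acts only on the projected gap.

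That route would need the averaged Jacobian to satisfy $\lVert\int_0^1 J(p(s))\,ds\,v\rVert_2\ge \kappa_{t,\mathrm{traj}}\lVert v\rVert_2$ for $v\in\mathbf{1}^\perp$. This holds because an average of symmetric matrices, each with smallest eigenvalue at least $\kappa$ on $\mathbf{1}^\perp$, inherits that bound. The route produces the constant $1/4$, which is stronger than the stated $1/8$. For the theorem as posed, however, the lemmas are assumed, and the proof is the two-line composition above.
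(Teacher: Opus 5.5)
Your proof is correct and matches the paper's intended argument. The paper gives no separate proof of this theorem: it is Lemma~\ref{lem:logit-jsd} composed with Lemma~\ref{lem:state-logit} (bias cancels, square the nonnegative inequality, multiply by $\kappa_{t,\mathrm{traj}}^2/8\ge 0$), and your sketch of Lemma~\ref{lem:logit-jsd} via Pinsker plus the Jacobian integral mirrors the appendix proof.

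One correction to your aside. Pinsker applied to $\mathrm{KL}(p\|m)$ and $\mathrm{KL}(q\|m)$ only yields $\mathrm{JSD}(p,q)\ge \tfrac12\mathrm{TV}(p,q)^2=\tfrac18\lVert p-q\rVert_1^2$. Your claimed bound $\tfrac14\lVert p-q\rVert_1^2$ is false: for $p=(1,0)$, $q=(0,1)$ it would assert $\log 2\ge 1$. Your constant $\tfrac14$ in front of $\lVert p-q\rVert_2^2$ is still recoverable, because $p-q$ sums to zero and hence $\lVert p-q\rVert_1^2\ge 2\lVert p-q\rVert_2^2$.
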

This result formalizes why large consecutive-step distributional shifts are consistent with
substantial internal state changes, supporting our use of $D_t=\mathrm{JSD}(\tilde{p}_t,\tilde{p}_{t-1})$
as an observable proxy for regime shifts.

\paragraph{Entropy as a proxy for decision fragility.}
\begin{definition}[Decision margin]
Let $\Delta_t := z_{t,(1)} - z_{t,(2)}$ be the gap between the largest and second-largest logits
at step $t$.
\end{definition}

\begin{lemma}[Margin robustness]
If an additive logit perturbation $\eta$ satisfies $\lVert \eta \rVert_{\infty} < \tfrac{1}{2}\Delta_t$,
then the argmax token under greedy decoding does not change.
\end{lemma}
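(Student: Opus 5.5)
The plan is to compare the perturbed logit vector $z_t+\eta$ coordinate by coordinate against the unperturbed top token. Let $i^\star$ be the index attaining $z_{t,(1)}$, so that $z_{t,i^\star}=z_{t,(1)}$. Then for every $j\neq i^\star$ we have $z_{t,j}\le z_{t,(2)}$, and so $z_{t,i^\star}-z_{t,j}\ge\Delta_t$. Since the Margin robustness lemma assumes $\lVert\eta\rVert_\infty<\tfrac12\Delta_t$, we have $\Delta_t>0$. In particular the unperturbed argmax is unique, and the greedy token is well defined without invoking the tie-breaking convention.

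The key step is a single chain of inequalities, valid for each $j\neq i^\star$:
\begin{equation}
  (z_{t,i^\star}+\eta_{i^\star})-(z_{t,j}+\eta_j)
  \;\ge\; \Delta_t-\lvert\eta_{i^\star}\rvert-\lvert\eta_j\rvert
  \;\ge\; \Delta_t-2\lVert\eta\rVert_\infty \;>\; 0 .
\end{equation}
This shows that $i^\star$ remains the strict unique maximizer of the perturbed logits. Because softmax is strictly monotone in each logit, the argmax of $\mathrm{softmax}(z_t+\eta)$ coincides with the argmax of $z_t+\eta$. Greedy decoding therefore emits the same token, which is exactly the conclusion of the lemma.

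I do not expect a genuine obstacle, since the argument is elementary. The only points that need care are the following.
\begin{itemize}
  \item The factor $\tfrac12$ is essential. The worst case is when $\eta$ lowers the top logit and raises a competitor simultaneously, and this is what produces the $2\lVert\eta\rVert_\infty$ term.
  \item The strict inequality in the hypothesis is what rules out ties after the perturbation.
  \item If the perturbation is stated in the restricted top-$k$ logit space of Proposition~\ref{prop:topk_analogue}, the same argument applies verbatim on $\bar{\mathcal{V}}$. Logits there are defined only up to an additive constant, but a common shift changes neither $\Delta_t$ nor the argmax, so the conclusion is unaffected.
\end{itemize}
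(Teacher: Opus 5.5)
Your proof is correct and complete. The inequality $(z_{t,i^\star}+\eta_{i^\star})-(z_{t,j}+\eta_j)\ge \Delta_t-2\lVert\eta\rVert_\infty>0$ is the entire content of the lemma. You also correctly note that the hypothesis forces $\Delta_t>0$, so the unperturbed argmax is unique and tied top logits are excluded (they would give $\Delta_t=0$ and make the hypothesis vacuous).

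The paper does not prove this lemma. The proofs appendix (\cref{sec:appendix-proofs}) covers only Lemma~\ref{lem:logit-jsd} and Theorem~\ref{thm:late-penalty}, so there is no competing argument to compare with. Your coordinatewise argument is the natural proof to supply.

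One small wording fix concerns why the argmax of $\mathrm{softmax}(z_t+\eta)$ matches that of $z_t+\eta$. The reason is that all coordinates share one normalizer, so $p_i/p_j=e^{z_i-z_j}$ and softmax preserves the ordering of coordinates. Saying softmax is ``strictly monotone in each logit'' does not by itself state this cross-coordinate order preservation.
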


\begin{lemma}[Entropy upper-bounds peak confidence]
For any distribution $P$, $H(P) \ge -\log P_{\max}$, where $P_{\max} := \max_i P(i)$.
\end{lemma}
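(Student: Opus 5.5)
The plan is to bound the entropy below by $-\log P_{\max}$ one term at a time. We write
\begin{equation}
  H(P) = -\sum_i P(i)\log P(i) = \sum_i P(i)\,\bigl(-\log P(i)\bigr),
\end{equation}
so $H(P)$ is the expectation of the surprisal $-\log P(X)$ under $X\sim P$. By the standard convention $0\log 0=0$, terms with $P(i)=0$ contribute nothing. We can therefore restrict the sum to the support $\{i: P(i)>0\}$, where every surprisal is finite.

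Next we bound each surprisal. For any $i$ in the support, $0<P(i)\le P_{\max}$. Since $-\log$ is decreasing on $(0,1]$, this gives $-\log P(i)\ge -\log P_{\max}$. Multiplying by the nonnegative weight $P(i)$ and summing over the support, whose weights total $1$, yields
\begin{equation}
  H(P)\ \ge\ \sum_{i:P(i)>0} P(i)\,(-\log P_{\max}) = -\log P_{\max},
\end{equation}
which is the claim. In other words, the entropy (Shannon entropy) is at least the min-entropy $H_\infty(P)=-\log P_{\max}$. Equality holds exactly when $P$ is uniform on its support.

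There is no real obstacle here; the only point needing care is the treatment of zero-probability coordinates. This matters in our setting because the zero-padded $\tilde{p}_t$ on the union support $\bar{\mathcal{V}}$ has zeros, and the paper computes logarithms only on positive probabilities. Restricting to the support handles both points uniformly, so the lemma applies verbatim to $\tilde{p}_t$ as well as to $p_t$. Reading the lemma in the contrapositive direction gives the intended interpretation: high $H_t$ forces $P_{\max}\ge e^{-H_t}$ to fail only when the distribution is diffuse, i.e. $P_{\max}\ge e^{-H_t}$, so small entropy certifies a confident top token. Combined with the margin robustness lemma, this links low entropy to a large decision margin and hence to robustness of the greedy choice.
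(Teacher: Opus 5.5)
Your proof is correct and complete. You write $H(P)$ as the $P$-average of the surprisal $-\log P(i)$ over the support and bound each term by $-\log P_{\max}$, which is the standard argument that Shannon entropy dominates min-entropy. Your equality case (uniform on the support) is also right. The paper itself states this lemma without proof: its proofs appendix covers only Lemma~\ref{lem:logit-jsd} and Theorem~\ref{thm:late-penalty}. So there is no competing route to compare with, and your argument supplies exactly what is missing.

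The one thing to repair is the interpretive sentence at the end, which is incoherent as written. The inequality $P_{\max}\ge e^{-H_t}$ holds for every distribution, so it never ``fails'', whether $H_t$ is high or not. What the lemma actually gives is that low entropy certifies a peaked distribution. Contrapositively, a diffuse distribution ($P_{\max}<e^{-h}$) forces $H>h$.

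The lemma does \emph{not} by itself give the paper's gloss that high entropy rules out a highly peaked distribution. A lower bound on $H$ cannot cap $P_{\max}$ when $H$ is large. That direction needs a Fano-type upper bound $H(P)\le h_2(P_{\max})+(1-P_{\max})\log(n-1)$, where $h_2$ is the binary entropy and $n\le k$ is the support size, so the resulting threshold depends on $k$.

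Your closing link to margin robustness is fine in the low-entropy regime. Once $e^{-H_t}>1/2$, the runner-up has probability at most $1-e^{-H_t}$. The log-probability gap, which equals the logit gap under softmax, is therefore at least $\log\bigl(e^{-H_t}/(1-e^{-H_t})\bigr)$.
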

High entropy therefore rules out a highly peaked distribution and is consistent with locally
fragile decisions where several candidates are competitive. This motivates combining distributional
change with an uncertainty term in $\It = D_t + \lambda H_t$. Empirically, this complements $D_t$
under top-$k$ logging, consistent with our $\lambda$ ablation.

\paragraph{Corrective vs.\ destructive instability and timing effects.}
We interpret instability events as transitions between more stable basins of continuation. Let
$\mathcal{A}_G$ denote an empirical basin of internal states that tend to converge to correct answers
and let $\mathcal{A}_B$ denote an empirical basin of internal states that tend to converge to
incorrect answers.
A transition $\mathcal{A}_B \to \mathcal{A}_G$ corresponds to corrective instability, while a transition
$\mathcal{A}_G \to \mathcal{A}_B$ corresponds to destructive instability.

\paragraph{Stabilization assumption.}
We formalize the recoverability intuition with a stylized stabilization-time assumption: entering
(or re-entering) the good basin may require a minimum number of remaining steps before the
trajectory reliably terminates correctly. Let $\mathrm{Correct}$ denote the event that the final
answer at step $T$ is correct. We assume there exist $\taumix\in\mathbb{N}$ and $\delta\in(0,1)$ such
that:
\begin{itemize}
  \item \textbf{(S1) Sufficient budget implies high success.} For any step $t$, if
  $h_t\in\mathcal{A}_G$ and $T-t\ge \taumix$, then
  \begin{equation}
    \Pr(\mathrm{Correct}\mid h_t\in\mathcal{A}_G,\ T-t\ge\taumix)\ \ge\ 1-\delta.
  \end{equation}
  \item \textbf{(S2) Fresh entry with insufficient budget is unreliable.}
  There exists an entry subset $\mathcal{A}_G^{\mathrm{entry}}\subseteq \mathcal{A}_G$ such that if
  $h_t\in\mathcal{A}_G^{\mathrm{entry}}$ and $T-t<\taumix$, then
  \begin{equation}
    \Pr(\mathrm{Correct}\mid h_t\in\mathcal{A}_G^{\mathrm{entry}},\ T-t<\taumix)\ \le\ \delta.
  \end{equation}
\end{itemize}
Intuitively, $\taumix$ captures a finite-horizon stabilization or mixing time needed after a
corrective transition before the trajectory reliably terminates correctly.

\begin{theorem}[Late correction penalty]
\label{thm:late-penalty}
Assume (S1) and (S2). If a corrective transition occurs at step $t^\star$ and
$h_{t^\star}\in \mathcal{A}_G^{\mathrm{entry}}$ with $T-t^\star < \taumix$, then the probability of
producing a correct final answer is at most $\delta$. Conversely, if $h_{t^\star}\in\mathcal{A}_G$
and $T-t^\star\ge \taumix$, then the success probability is at least $1-\delta$.
\end{theorem}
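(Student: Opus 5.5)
The result follows almost directly from the stabilization assumptions, so the plan is to make the conditioning events explicit and read off each direction from (S1) or (S2). I will fix the notation first. Let $t^\star$ be the step at which the corrective transition $\mathcal{A}_B\to\mathcal{A}_G$ occurs; the tie-breaking convention makes $t^\star$ well defined. Let $E_{\mathrm{late}}$ be the event $\{h_{t^\star}\in\mathcal{A}_G^{\mathrm{entry}},\ T-t^\star<\taumix\}$, and let $E_{\mathrm{early}}$ be the event $\{h_{t^\star}\in\mathcal{A}_G,\ T-t^\star\ge\taumix\}$. The theorem then reduces to two inequalities: $\Pr(\mathrm{Correct}\mid E_{\mathrm{late}})\le\delta$ and $\Pr(\mathrm{Correct}\mid E_{\mathrm{early}})\ge 1-\delta$.

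For the first (late) claim, I instantiate (S2) at $t=t^\star$. The hypotheses $h_{t^\star}\in\mathcal{A}_G^{\mathrm{entry}}$ and $T-t^\star<\taumix$ are exactly the conditioning event in (S2), which gives the bound $\delta$ immediately.

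For the converse, I instantiate (S1) at $t=t^\star$. Its hypotheses $h_{t^\star}\in\mathcal{A}_G$ and $T-t^\star\ge\taumix$ are exactly those of (S1), giving success probability at least $1-\delta$. I will also note that the two regimes are consistent, since $\mathcal{A}_G^{\mathrm{entry}}\subseteq\mathcal{A}_G$ and the budget conditions are disjoint. Both bounds can therefore hold at once only if $\delta<1-\delta$, i.e.\ $\delta<1/2$, which separates early from late correction; I would add this as a remark.

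The one real subtlety is that $t^\star$ is a random, trajectory-dependent time, while (S1)--(S2) are stated for a fixed step $t$. I will handle this by conditioning on $\{t^\star=t\}$ and applying the assumption for each fixed $t$. This is valid if we read (S1)--(S2) as holding conditionally on any history up to $t$, a Markov-type reading consistent with the closed-loop model $h_{t+1}=f(h_t,x_t)$. I then average over $t$ via the law of total probability. Because the per-$t$ bound is uniform in $t$, the averaged bound is the same $\delta$, respectively $1-\delta$. I expect this step to be the main point needing care. If the assumptions are read only unconditionally, I would state explicitly that (S1)--(S2) are assumed to hold conditionally on the event $\{t^\star=t\}$, which is the natural reading of the stylized assumption.
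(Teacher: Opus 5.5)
Your proof is correct and follows the same route as the paper: each direction is a direct instantiation of (S2), respectively (S1), at $t=t^\star$, followed by a law-of-total-probability step that averages a uniform bound. The paper averages over realizations of $h_{t^\star}$ given $T-t^\star<\tau_{\mathrm{mix}}$, whereas you average over the value of $t^\star$; your version makes explicit the conditional reading of (S1)--(S2) that the paper uses implicitly. One correction to your side remark: the two bounds concern disjoint budget conditions, so they are compatible for every $\delta\in(0,1)$, and $\delta<1/2$ is needed only for them to be informative, i.e., to actually separate late from early correction.
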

This formalizes the recoverability intuition underlying our empirical timing effect: late instability
peaks are less recoverable given a limited remaining decoding budget.

\paragraph{Summary.}
\begin{table*}[t]
\caption{How the theoretical statements relate to the main empirical findings.}%
\centering
\begin{adjustbox}{max width=\textwidth}
\begin{tabular}{l l}
\toprule
Empirical observation & Theoretical intuition \\
\midrule
JSD spikes predict failure trends & \Cref{thm:observable-lower-bound} \\
High entropy indicates fragile decisions & Lemmas above on margin and entropy \\
Not all instability is harmful & Basin interpretation and timing \\
Late peaks are associated with lower accuracy & \Cref{thm:late-penalty} \\
\bottomrule
\end{tabular}
\end{adjustbox}
\label{tab:theory_summary}
\end{table*}

\begin{figure}[t]
\centering
\includegraphics[width=\linewidth]{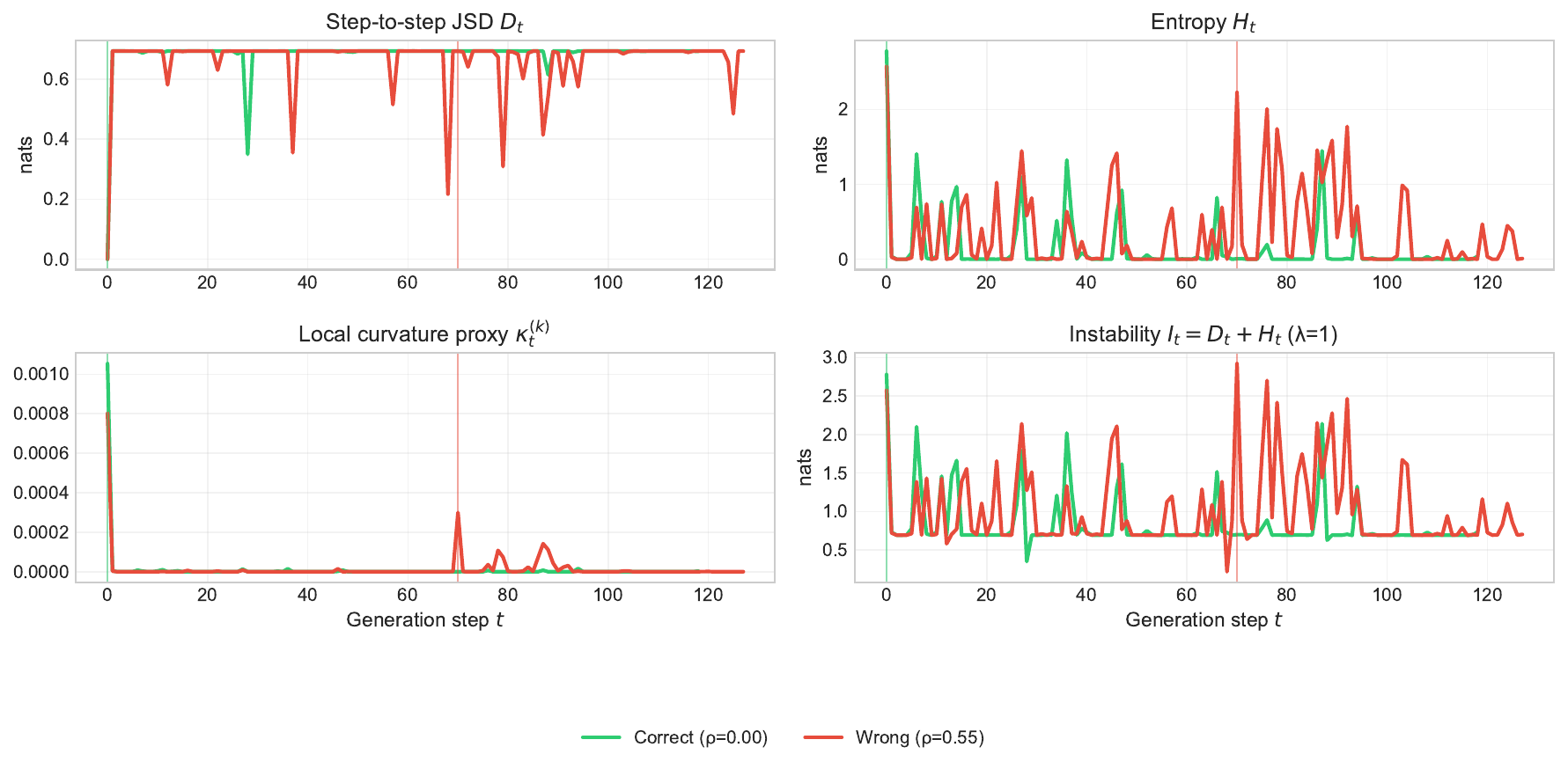}
\caption{Theory and experiment alignment on representative \LlamaFull{} GSM8K greedy traces.
We plot step-to-step JSD $D_t$, entropy $H_t$, the local softmax Jacobian curvature proxy
$\kappa_t^{(k)}$ computed as the smallest non-zero eigenvalue of
$\mathrm{diag}(\tilde{p}_t)-\tilde{p}_t\tilde{p}_t^\top$ (equivalently, the minimum eigenvalue on
$\mathbf{1}^\perp$) on the renormalized top-$k$ distribution,
and the combined instability
$\It=D_t+H_t$ (with $\lambda=1$). The example traces are selected deterministically to illustrate
an early-peak correct trace and a late-peak wrong trace. Large JSD spikes occur during uncertain
segments where $\kappa_t^{(k)}$ is non-negligible, consistent with Lemma~\ref{lem:logit-jsd}, and
late peak position aligns with the recoverability interpretation in Theorem~\ref{thm:late-penalty}.}%
\label{fig:theory_alignment}
\end{figure}

\subsection{\texorpdfstring{Implementation Note: Top-$k$ Truncation}{Implementation Note: Top-k Truncation}}
We use top-$k$ logging because it is available in many black-box inference settings and keeps
storage and compute tractable at per-step resolution. Truncation and renormalization can affect
both entropy and JSD: when consecutive steps have low support overlap (high support-set turnover),
computing divergences on the union of two renormalized top-$k$ supports can amplify spikes relative
to full vocabulary distributions.
Intuitively, when the top candidates change substantially from one step to the next, the truncated
distributions share little support, producing a large JSD spike.

To ensure our conclusions are not artifacts of a particular truncation choice, we run two validation
checks. First, on held-out traces we validate the ``corrective vs.\ destructive'' timing effect
using entropy/JSD computed from full vocabulary logits (no truncation; \cref{sec:corrective-destructive}).
Second, we recompute the main signal from the same logged top-$50$ traces using smaller effective
values of $k$ (by truncating the logged list) and find qualitatively similar separability trends
once $k$ is moderately large (\cref{sec:appendix-topk}).
}

\section{Experiments}
\subsection{Task and Models}
Datasets: GSM8K (test split)~\cite{cobbe2021training} and HotpotQA (distractor validation split)~\cite{yang2018hotpotqa}.
For controlled GSM8K-300 experiments we use the first 300 GSM8K test examples (starting from index
0 for determinism) with \QwenTwoFiveFull~\cite{yang2024qwen2} and \LlamaFull~\cite{grattafiori2024llama3}.
We additionally report dense per-step baseline runs on the full GSM8K test set (1319 examples) and
HotpotQA distractor validation split (7405 examples), including \LlamaThreeBFull{} and \LlamaEightBFull{}~\cite{grattafiori2024llama3}
(\cref{sec:appendix-fullset}). Models are instruction-tuned to reduce formatting confounds; sizes span $\sim$0.5B--8B to test scale
robustness under a fixed diagnostic.

\subsection{Prompting and Decoding}
For GSM8K, prompting and scoring follow the standard format: we prompt with the question text, and
a prediction is scored correct if the final extracted numeric answer matches the reference answer.
We extract the last number in the model output as the predicted answer.
For HotpotQA, we prompt with the question and concatenated context passages and ask for a short,
direct answer; we score correctness by comparing the first line of the model output to the
reference using normalized matching with a containment check.
We deliberately use minimal, zero-shot prompting (no few-shot exemplars) to keep decoding dynamics
comparable and avoid confounds from prompt engineering. As a result, absolute accuracy can be low
for these small models; our focus is on ranking and separability (AUC, monotonic bucket trends),
which is less sensitive to the overall error rate than raw accuracy and is therefore suitable for
diagnostic separability.

We report greedy decoding and stochastic decoding. For GSM8K-300, we evaluate \QwenTwoFiveFull{} with greedy decoding
and evaluate \LlamaFull{} under a small decoding grid with top-$p$ 0.9 and a fixed random seed. For \LlamaFull{},
we additionally evaluate $\temp=0.3$ (top-$p$ 0.9), yielding a small decoding grid
$\temp\in\{0.0,0.3,0.7\}$. In analyses that pool multiple \LlamaFull{} decoding settings (e.g.,
peak-step characteristics), we aggregate across this grid, yielding $3\times 300$ traces in total.

\subsection{Logging and Trace Length Controls}
In our main experiments, we log top-$k$ token log probabilities per-step and cap
generation to 128 new tokens to reduce length confounding. In addition to top-$k$ traces, we
validate key timing findings with a held-out run that logs entropy/JSD from full vocabulary logits (no
truncation; \cref{sec:corrective-destructive}). We also compute early-window versions of our
strength statistic to control for trace length (see Metrics and \cref{sec:appendix-early-window}).
Unless otherwise noted, we log top-$k$ distributions with $k=50$ and report top-$k$ sensitivity in
Appendix~\ref{sec:appendix-topk}.
For timing analyses, we additionally use fixed-window peak position (e.g., $\rho_{50}$) to decouple
peak-timing effects from the total trace length (\cref{sec:appendix-fixed-window}).

\subsection{Metrics}
We evaluate instability as a diagnostic signal rather than an intervention. Let $y \in \{0,1\}$
be the error label ($y=1$ if the final answer is wrong). We report:
\begin{itemize}
  \item \textbf{Bucketed accuracy}: partition examples into five equal-sized quantile buckets by
  instability strength $S$ and report accuracy per bucket.
  \item \textbf{Rank association}: Spearman correlation between $S$ and correctness (encoded as
  $1$ for correct and $0$ for wrong).
  \item \textbf{Separability}: ROC--AUC for predicting $y$ from $S$ (denoted AUC$_{\text{wrong}}$).
\end{itemize}

We additionally compute early-window versions by replacing $S$ with $S_w$ to verify that the
signal is not purely driven by longer generations (a key confound for any max-over-time statistic).

\subsection{Ablations and Controls}
We include a minimal ablation on the instability signal by comparing $\lambda=0$ (JSD only) and
$\lambda=1$ (JSD+entropy). We also report a small decoding grid (temperature $\temp \in \{0.0,0.3,0.7\}$ with
top-$p$ 0.9) to verify robustness under stochasticity.

\section{Results}
\subsection{Instability Strength Correlates Monotonically with Failure Rate}
Instability strength shows a clear monotonic relationship with failure rate.
Across models and decoding settings, bucketed accuracy decreases as instability strength
increases (\cref{fig:bucket_trends}). The monotonic decrease is observed under equal-sized buckets,
rather than an artifact of sample imbalance.
This is consistent with the primary claim that dynamically unstable trajectories are disproportionately
associated with wrong final answers.

\subsection{Instability Strength Separates Correct and Incorrect Reasoning}
Instability strength separates correct from incorrect outputs: Spearman correlations are
consistently negative and AUC$_{\text{wrong}}$ values are consistently above chance
(\cref{tab:robustness}). Early-window signals preserve predictive power,
indicating the effect is not simply driven by longer generations.
Bootstrap confidence intervals over examples support the stability of these trends
(\cref{sec:appendix-bootstrap}).

We reuse the same traces and vary only the analysis window to control for length. Even when
restricted to the first 50 steps, instability remains predictive (e.g., AUC $\approx 0.66$ on
\LlamaFull), indicating that the signal is not an artifact of longer sequences.
We report explicit early-window AUC values (using $S_{50}$) in \cref{sec:appendix-early-window}.
This fixed-window evaluation directly targets the trace-length confound: longer generations provide
more opportunities for a large $\max_t \It$ even under identical per-step dynamics.
For example, using $S_{50}$ yields AUC$_{\text{wrong}}$ of 0.605 on \QwenTwoFiveFull{} greedy and 0.665 on
\LlamaFull{} ($\temp=0.0$) (\cref{tab:early_window_auc}).
This suggests a secondary claim: instability is detectable early, before full reasoning
completion, which is valuable for diagnosis and triage.
We include a full early-warning curve (varying the prefix window length) in
Appendix~\ref{sec:appendix-early-window}.

\newcommand{\EarlyWarningFigureAppendix}{%
\begin{figure}[t]
\centering
\includegraphics[width=0.9\linewidth]{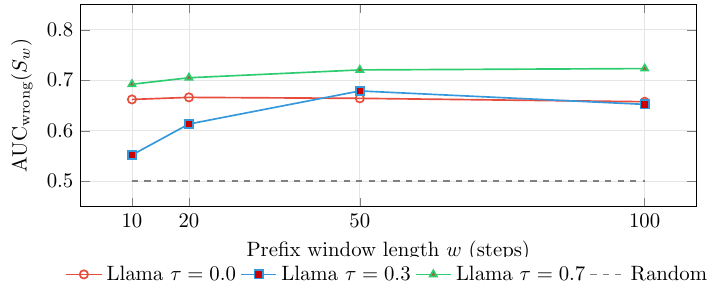}
\caption{Early-warning separability as a function of prefix window length on GSM8K-300.
We report AUC$_{\text{wrong}}(S_w)$ as a function of the window size $w$.
Separability emerges with short prefixes and improves with longer windows, supporting that
instability is detectable before completion of the full reasoning trace.}%
\label{fig:early_warning_auc_curve}
\end{figure}
}

\Cref{tab:robustness} summarizes cross-model and decoding robustness, reporting overall
accuracy alongside the separability of instability strength.

\begin{table*}[t]
\caption{Cross-model and decoding robustness on GSM8K (first 300 test examples; top-$p$ 0.9 for stochastic decoding).
AUC values remain stable across models and decoding settings, supporting a model-agnostic signal.}%
\centering
\begin{adjustbox}{max width=\textwidth}
\begin{tabular}{l l c c c}
\toprule
Model & Decoding & Accuracy & AUC$_{\text{wrong}}$ & Spearman \\
\midrule
\QwenTwoFiveInst{} & greedy & 0.430 & 0.681 & $-0.311$ \\
\LlamaOneBInst{} & $\temp=0.0$ & 0.123 & 0.657 & $-0.178$ \\
\LlamaOneBInst{} & $\temp=0.3$ & 0.123 & 0.667 & $-0.190$ \\
\LlamaOneBInst{} & $\temp=0.7$ & 0.094 & 0.741 & $-0.244$ \\
\bottomrule
\end{tabular}
\end{adjustbox}
\label{tab:robustness}
\end{table*}
Across models and decoding settings (\cref{tab:robustness}), AUC values remain stable. On \LlamaFull,
the monotonic trend appears with AUC values around 0.66 (greedy) to 0.74 (stochastic). The small
decoding grid ($\temp \in \{0.0,0.3,0.7\}$) preserves the trend: increased stochasticity reduces
accuracy but does not remove the instability-failure relationship.

We additionally validate on full GSM8K and HotpotQA runs, and include a ReClor~\cite{yu2020reclorreadingcomprehensiondataset} check as a
representative setting where the signal can weaken.
ReClor is a multiple-choice logical reasoning benchmark; under our short-answer decoding
and first-line matching setup, correct predictions are extremely sparse.
As a result, errors are dominated by stable-but-wrong trajectories, which can induce weak
or even reversed correlations between instability and correctness in this setting.
Detailed tables and figures are reported in Appendix~\ref{sec:appendix-fullset}.

\newcommand{\FullSetRunsAppendix}{%
\begin{table*}[t]
\caption{Dense per-step runs on GSM8K and HotpotQA (full test/validation splits), plus a ReClor
validation subset (300 examples) for \LlamaFull{} under stochastic decoding.
We report separability via AUC$_{\text{wrong}}$ for the full-trace peak strength $S=\max_t \It$ and
an early-window control $S_{50}=\max_{t\le 50}\It$.}%
\centering
\scriptsize
\setlength{\tabcolsep}{4pt}
\begingroup
\rowcolors{2}{gray!10}{white}
\begin{adjustbox}{max width=\textwidth}
\begin{tabular}{@{}l l r c c c c@{}}
\toprule
Model & Dataset & $N$ & Accuracy & \shortstack{AUC$_{\text{wrong}}$\\$(S)$} & Spearman & \shortstack{AUC$_{\text{wrong}}$\\$(S_{50})$} \\
\midrule
\LlamaOneBInst{} & GSM8K & 1319 & 0.434 & 0.673 & $-0.298$ & 0.615 \\
\LlamaThreeBInst{} & GSM8K & 1319 & 0.707 & 0.687 & $-0.295$ & 0.618 \\
\LlamaEightBInst{} & GSM8K & 1319 & 0.749 & 0.692 & $-0.288$ & 0.598 \\
\QwenHalfBInst{} & GSM8K & 1319 & 0.301 & 0.714 & $-0.341$ & 0.654 \\
\QwenTwoFiveInst{} & GSM8K & 1319 & 0.416 & 0.643 & $-0.244$ & 0.565 \\
\QwenTwoFiveSevenBInst{} & GSM8K & 1319 & 0.500 & 0.644 & $-0.250$ & 0.581 \\
\LlamaOneBInst{} & HotpotQA & 7405 & 0.483 & 0.594 & $-0.163$ & 0.594 \\
\LlamaThreeBInst{} & HotpotQA & 7405 & 0.632 & 0.662 & $-0.271$ & 0.662 \\
\LlamaEightBInst{} & HotpotQA & 7405 & 0.704 & 0.675 & $-0.277$ & 0.675 \\
\QwenHalfBInst{} & HotpotQA & 7405 & 0.410 & 0.568 & $-0.116$ & 0.575 \\
\QwenTwoFiveInst{} & HotpotQA & 7405 & 0.537 & 0.603 & $-0.179$ & 0.603 \\
\QwenTwoFiveSevenBInst{} & HotpotQA & 7405 & 0.675 & 0.654 & $-0.250$ & 0.654 \\
\LlamaOneBInst{} & ReClor & 300 & 0.027 & 0.054 & $0.250$ & 0.046 \\
\bottomrule
\end{tabular}
\end{adjustbox}
\endgroup
\label{tab:fullset_runs}
\end{table*}

\begin{figure}[t]
\centering
\includegraphics[width=\linewidth]{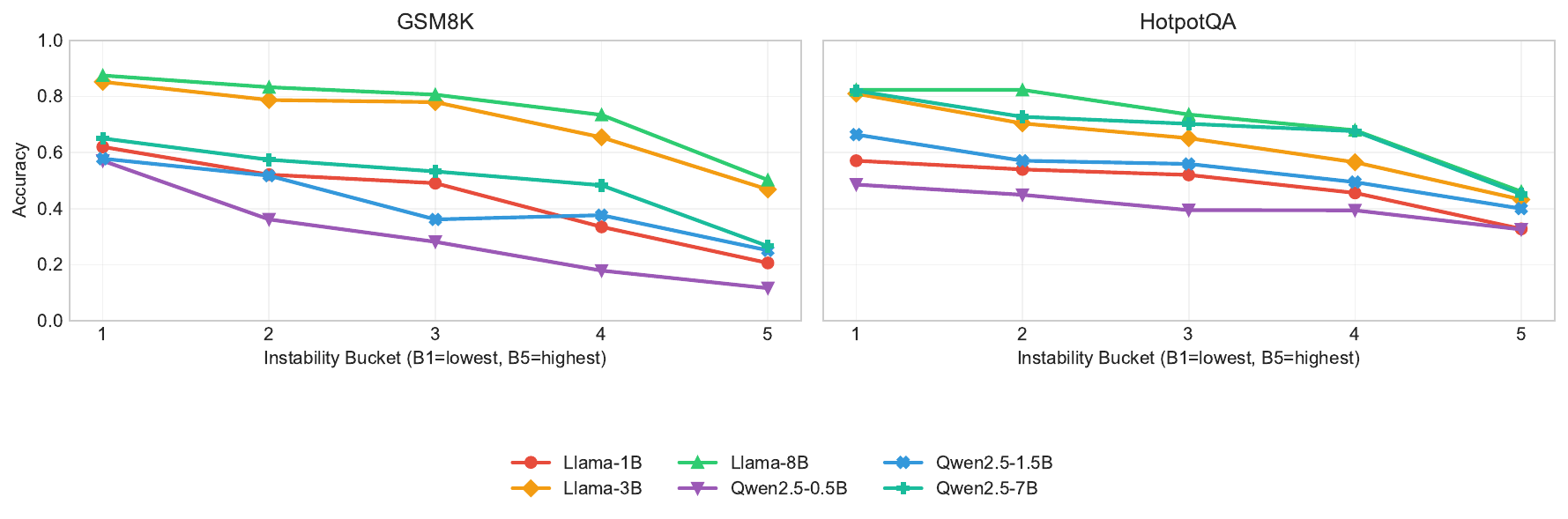}
\caption{Bucketed accuracy trends for the full-set runs in \cref{tab:fullset_runs}.
Each curve plots accuracy across five equal-sized instability buckets (B1 to B5), showing a
monotonic decline as instability increases on both tasks and for all models.}%
\label{fig:fullset_bucket_trends}
\end{figure}

\begin{figure}[t]
\centering
\includegraphics[width=\linewidth]{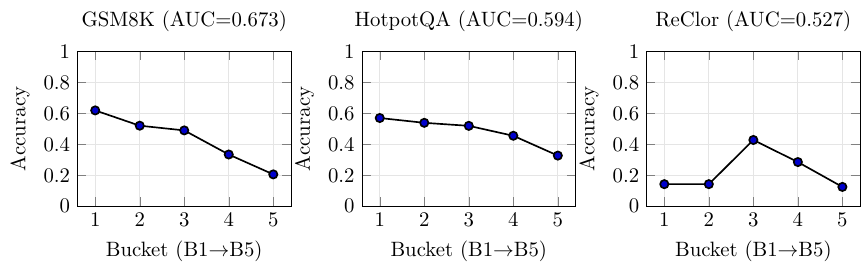}
\caption{Cross-task validation on a fixed model (Llama-3.2-1B).
We plot bucketed accuracy trends (B1 to B5) by instability strength on GSM8K, HotpotQA, and ReClor.
On GSM8K and HotpotQA, accuracy declines with instability. On ReClor in this short-answer
multiple-choice setting, the trend inverts: most errors are stable-but-wrong, and the rare correct
traces exhibit higher instability.}%
\label{fig:cross_task_bucket_trends_llama1b}
\end{figure}
}

\subsection{Not All Instability Is Harmful}
Instability magnitude alone is insufficient: some high-instability episodes reflect beneficial
self-correction rather than failure. We refine diagnosis by distinguishing \emph{corrective} versus
\emph{destructive} instability via the timing of the peak (Section~\ref{sec:corrective-destructive}).

\subsection{Signal Ablation}
Using JSD alone ($\lambda=0$) collapses the strength distribution and yields near-random
predictive power (AUC $\approx 0.52$) on \LlamaFull, while the combined signal
($\lambda=1$) restores clear separation and monotonicity. This supports the need to combine
distributional change with uncertainty.

We observe that a JSD-only signal ($\lambda=0$) becomes nearly constant on \LlamaFull,
yielding near-random separability. A plausible explanation is saturation under top-$k$
approximations: when successive steps have weak support overlap, renormalized top-$k$
distributions can reduce the dynamic range of divergence estimates, reducing sensitivity to more
graded changes in instability. Adding entropy restores a robust signal by capturing uncertainty and competition
among alternatives even when support overlap is limited. This ablation indicates that
uncertainty is a necessary component of our instability proxy.
We also explored change-point detectors, but found continuous strength more stable as a diagnostic
signal; see \cref{sec:appendix-detector}.
We additionally report time-structure negative controls and entropy-family baselines on the same
traces; see \cref{sec:appendix-controls}.
We include additional baseline tables and ablation figures in Appendix~\ref{sec:appendix-controls}.

\newcommand{\EntropyFamilyBaselinesTable}{%
\begin{table*}[t]
\caption{Entropy-family baselines on GSM8K (300 examples) for greedy decoding.
JSD-only statistics are near chance, consistent with our $\lambda$ ablation.}%
\centering
\small
\setlength{\tabcolsep}{4pt}
\begin{adjustbox}{max width=\textwidth}
\begin{tabular}{l l c c}
\toprule
Model & Statistic & AUC$_{\text{wrong}}(S)$ & Spearman$(S)$ \\
\midrule
\LlamaFull{} & $S_I=\max_t \It$ & 0.657 & $-0.178$ \\
\LlamaFull{} & $S_H=\max_t H_t$ & 0.650 & $-0.171$ \\
\LlamaFull{} & $S_{\Delta H}=\max_t |H_t-H_{t-1}|$ & 0.563 & $-0.072$ \\
\LlamaFull{} & $S_D=\max_t D_t$ & 0.511 & $-0.015$ \\
\bottomrule
\end{tabular}
\end{adjustbox}
\label{tab:entropy_family_baselines_s}
\end{table*}
}

\begin{figure}[t]
\centering
\includegraphics[width=0.9\linewidth]{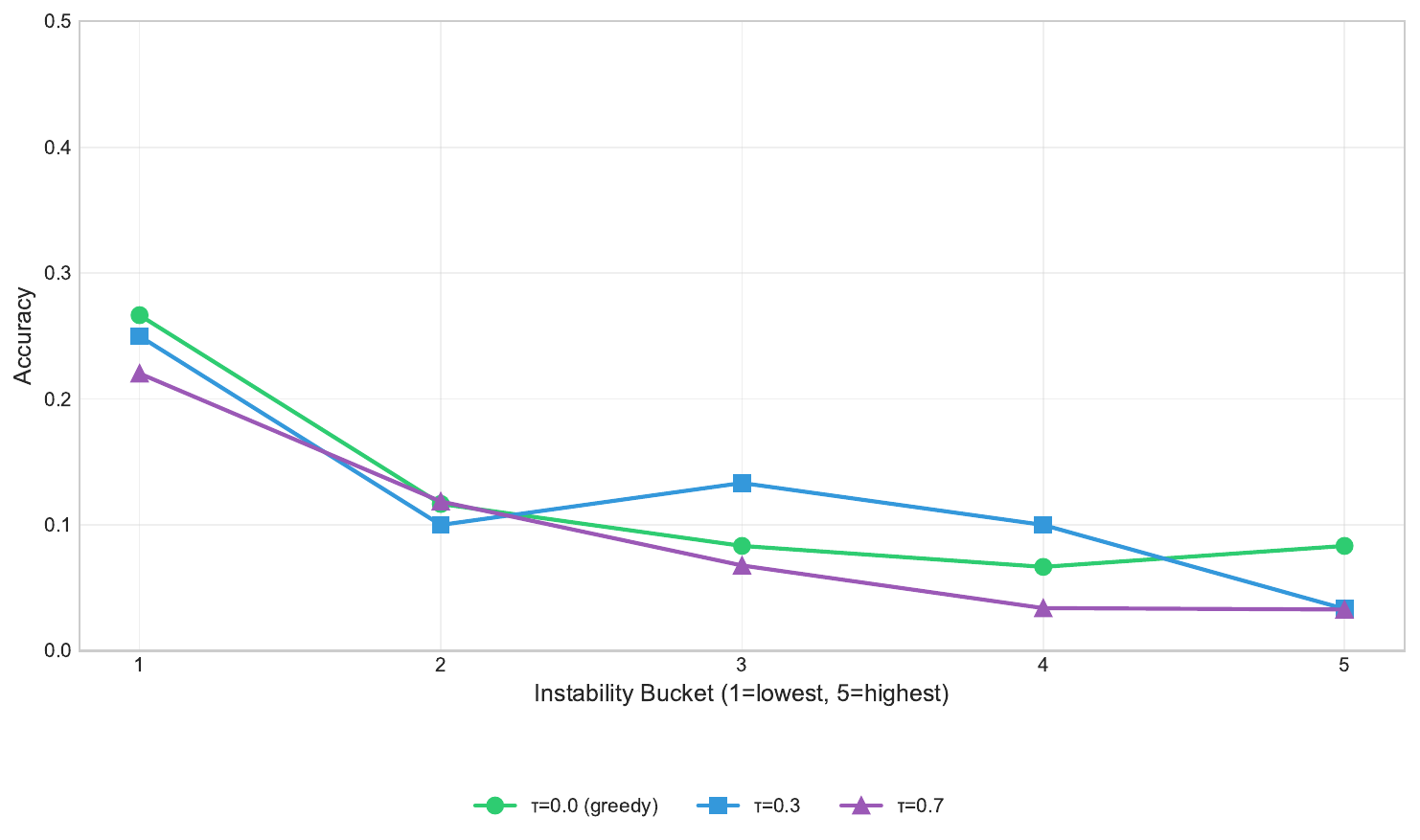}
\caption{Monotonic bucket trends for \LlamaFull{} across temperature settings
($\temp \in \{0.0, 0.3, 0.7\}$). Higher instability buckets consistently show lower accuracy,
confirming the predictive value of the instability signal.}%
\label{fig:bucket_trends}
\end{figure}

\newcommand{\LambdaAblationFigure}{%
\begin{figure}[t]
\centering
\includegraphics[width=0.9\linewidth]{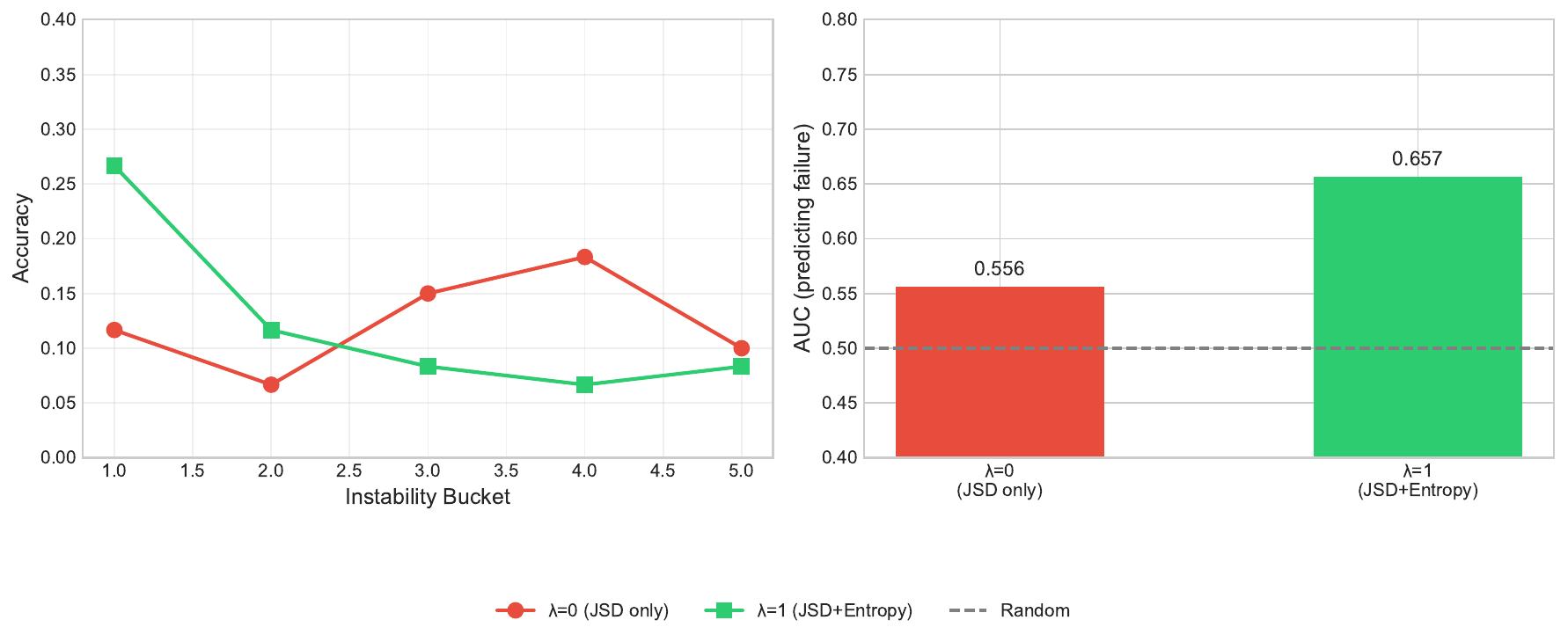}
\caption{Signal ablation on \LlamaFull: comparing JSD-only ($\lambda=0$) vs.\ JSD+Entropy
($\lambda=1$). The entropy component improves discrimination, particularly in lower-instability
buckets where pure JSD may saturate.}%
\label{fig:lambda_ablation}
\end{figure}
}

\newcommand{\FailureModeFigureAppendix}{%
\begin{figure}[t]
\centering
\includegraphics[width=0.85\linewidth]{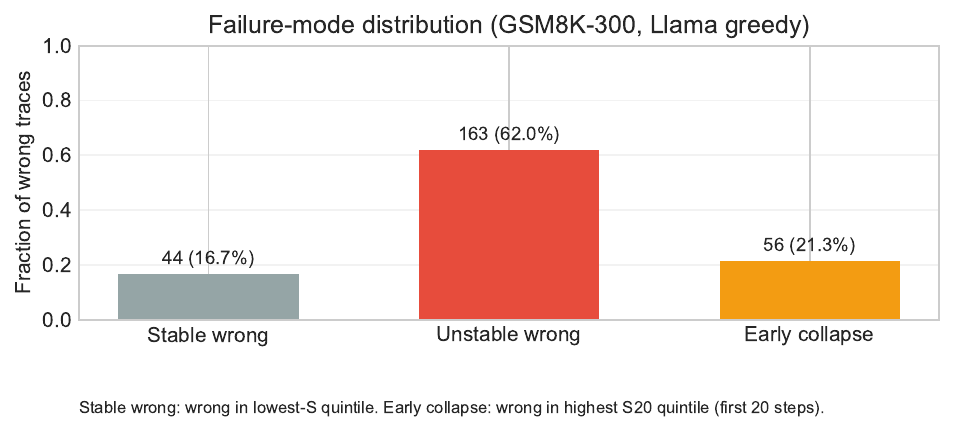}
\caption{Failure-mode distribution on GSM8K (300 examples) for \LlamaFull{} greedy.
We partition wrong traces into three disjoint categories for interpretability: \emph{stable wrong}
(wrong traces in the lowest quintile of instability strength $S$), \emph{early collapse}
(wrong traces in the highest quintile of early-window strength $S_{20}=\max_{t\le 20} \It$), and
\emph{unstable wrong} (the remaining wrong traces).
This breakdown highlights that a non-trivial fraction of errors are stable wrong, consistent with
the limitation that instability is not a universal explanation of failure.}%
\label{fig:failure_mode_distribution}
\end{figure}
}

\newcommand{\PeakPositionFigureAppendix}{%
\begin{figure}[t]
\centering
\includegraphics[width=0.75\linewidth]{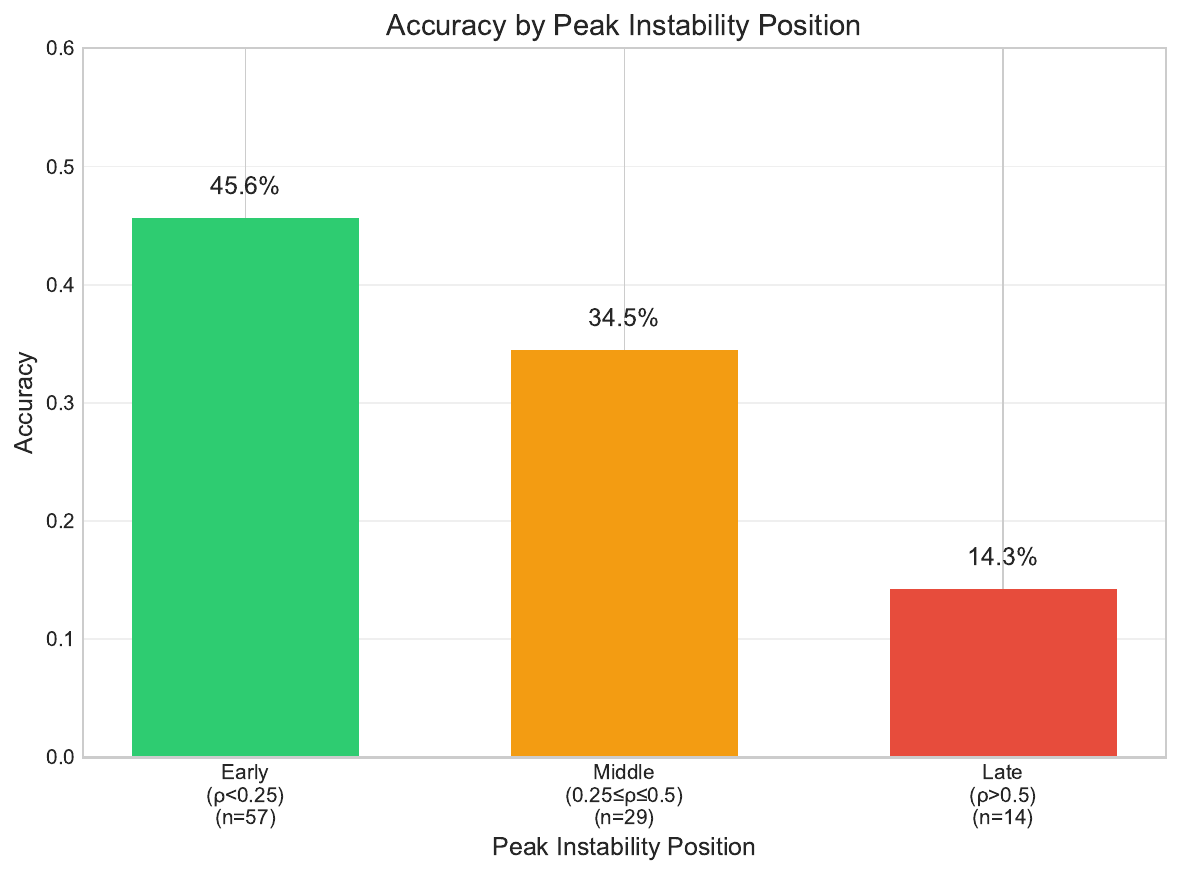}
\caption{Accuracy by peak instability position on the 100-trace held-out baseline run (same setting as
\Cref{tab:corrective-destructive}). Early peaks allow recovery and yield higher accuracy; late peaks
leave insufficient time to recover.}%
\label{fig:corrective-destructive}
\end{figure}
}

\section{Corrective vs.\ Destructive Instability}
\label{sec:corrective-destructive}

A critical observation emerges from our analysis: not all high-instability episodes are
harmful. We identify two qualitatively distinct regimes of dynamic instability with opposite
implications for reasoning outcomes.
These regimes are defined jointly by instability events and recoverability, rather than by
instability magnitude alone.

\smallskip
\subsection{Two Regimes of Instability}

\paragraph{Destructive instability.}
Destructive instability refers to high-instability episodes that are followed by continued
instability and incorrect final answers. In these traces, uncertainty tends to rise and the
trajectory does not stabilize onto a coherent solution path.

\paragraph{Corrective instability.}
Corrective instability refers to high-instability episodes that are followed by stabilization and
correct final answers. In these traces, uncertainty often decreases after the peak and the
trajectory converges.

\subsection{Operationalization}
We operationalize recoverability using the \textbf{peak instability position}, which captures
when the strongest instability occurs within a reasoning trace.

\smallskip
Given a trace $\{\tilde{p}_t\}_{t=1}^T$, let $t^\star = \arg\max_t \It$ (breaking ties by the smallest $t$) denote the step of peak
instability. We define the relative peak position as $\rho = t^\star / T$, where $T$ is the
trace length, and classify traces as:
\begin{itemize}
  \item \textbf{Early peak} ($\rho < 0.25$): Instability occurs early, leaving time for
  subsequent recovery; this regime is characteristic of corrective instability.
  \item \textbf{Late peak} ($\rho > 0.5$): Instability occurs late, with insufficient time
  for recovery; this regime is characteristic of destructive instability.
\end{itemize}
We adopt coarse thresholds for interpretability (clear ``early'' vs.\ ``late'' separation);
the qualitative ordering is robust to alternative threshold pairs and to a binned analysis of $\rho$
(\cref{sec:appendix-rho-thresholds}), and also holds under a fixed-window peak definition
(\cref{sec:appendix-fixed-window}).

This operationalization captures a simple intuition: when difficulty arises early, the model
may still recover through subsequent stable reasoning, whereas late instability leaves limited
opportunity for correction. Importantly, $\rho$ is descriptive rather than causal; we do not
claim that peak position \emph{causes} success or failure. As a control against the trivial
``late means no time left'' critique, we additionally measure peak position within a fixed early
window and observe the same qualitative ordering (\cref{sec:appendix-fixed-window}).

\smallskip
\subsection{Empirical Evidence}

We partition traces by peak instability position and compare their final-answer accuracy on a
held-out baseline run of \LlamaFull{} (100 traces, logged without top-$k$ truncation for this
validation).

\begin{table}[t]
\caption{Accuracy by peak instability position on the held-out baseline run (\LlamaFull, GSM8K, 100 traces).
Early-peak traces achieve more than $3\times$ the accuracy of late-peak traces, despite both
exhibiting high instability at some point.}%
\centering
\begin{adjustbox}{max width=\columnwidth}
\begin{tabular}{l c c}
\toprule
Peak position & \% of traces & Accuracy \\
\midrule
Early ($\rho < 0.25$) & 57\% & 0.46 \\
Middle ($0.25 \le \rho \le 0.5$) & 29\% & 0.35 \\
Late ($\rho > 0.5$) & 14\% & 0.14 \\
\bottomrule
\end{tabular}
\end{adjustbox}
\label{tab:corrective-destructive}
\end{table}

\Cref{tab:corrective-destructive} shows a clear monotonic relationship between peak position and
accuracy. Traces where instability peaks early have substantially higher accuracy (46\%) than those
where it peaks late (14\%), suggesting that instability magnitude alone is an incomplete diagnostic:
the \emph{timing} and \emph{recoverability} of instability matter.
To characterize peak steps beyond timing, we also report two ``low-cost probes'' (margin drops and
support-set turnover) in Appendix~\ref{sec:appendix-peak-probes}.
At the peak step, corrective traces tend to exhibit sharper probability margin drops, while
destructive traces exhibit higher support-set turnover, consistent with decisive revision versus
chaotic route switching (Appendix~\ref{sec:appendix-peak-probes}).

We provide a visualization in Appendix~\ref{sec:appendix-additional-figures} (\Cref{fig:corrective-destructive}).

\subsection{Implications}




This distinction has two important implications for how inference-time instability should be
interpreted. First, raw instability strength ($S = \max_t \It$) remains a useful aggregate predictor
of failure, but peak position provides a complementary summary that separates traces which recover
from those that do not. This highlights that temporal structure carries information beyond magnitude
alone.

Second, our analysis clarifies the scope of this work. We do not propose or evaluate interventions;
instead, our goal is diagnosis and mechanistic characterization based solely on black-box
inference-time observables.

Taken together, these findings refine our central claim: while dynamic instability is predictive of
failure, the underlying mechanism is nuanced. Destructive instability drives most of the predictive
signal, whereas corrective instability acts as a confound that, once accounted for, sharpens the
diagnostic.

\section{Discussion}
\paragraph{Takeaways.}
\textbf{What we showed:} inference-time instability can be detected from black-box signals (top-$k$
token distributions) and reliably predicts reasoning failure, with consistent monotonic bucket
trends and above-chance AUC across models, decoding settings, and tasks (including full-set
validation on \LlamaThreeBFull{} and \LlamaEightBFull{}).
\textbf{What we clarified:} not all instability is harmful; its diagnostic meaning depends on
timing, with early peaks more often followed by recovery and late peaks more often followed by
failure.
\textbf{What we did not claim:} we do not claim a unique internal mechanism, causality, or
controllability; this paper is diagnosis-only and does not propose stabilization operators.
Taken together, our results suggest that inference-time instability provides a reliable diagnostic
signal for reasoning behavior, while its consequences depend critically on when it arises.

\subsection{Failure Taxonomy}
Our framing separates at least three conceptual failure modes.
(1) \emph{Stable-but-wrong} failures produce incorrect answers despite low instability strength $S$,
often consistent with knowledge gaps or spurious heuristics.
(2) \emph{Dynamically unstable} failures exhibit high instability strength and cascading divergence,
which is the focus of this paper.
(3) \emph{Early-collapse} failures exhibit high early-window instability (e.g., large
$S_{20}=\max_{t\le 20}\It$) and are often tied to formatting errors or decoding pathologies. This
differs from early-peak-but-recoverable cases in \cref{sec:corrective-destructive}, where early
instability is followed by stabilization and correct completion. This taxonomy clarifies that
instability is a specific diagnostic dimension, not a universal explanation of failure.

We provide a breakdown figure in Appendix~\ref{sec:appendix-additional-figures} (\Cref{fig:failure_mode_distribution}).

\subsection{Mechanistic Interpretation}
The instability signal is purely inference-time and black-box: it uses only per-step token
probabilities. Mechanistically, spikes can reflect \emph{route switching} (abrupt changes in which
continuation is locally preferred) and \emph{decision fragility} (near ties among candidates).
We provide a longer interpretation, together with the supporting ``low-cost probes'', in
Appendix~\ref{sec:appendix-mechanistic}.

\section{Limitations}
First, not all failures are dynamic; stable-but-wrong cases exist and are not addressed by
instability signals. Second, change-point detection is sensitive to detector choice and
thresholding, making binary change-point labels less reliable than continuous strength.
Third, our approach assumes that relative changes in model probabilities carry meaningful
information about underlying state transitions. While we do not require perfect probability
calibration, future work can study how calibration quality affects this diagnostic signal.
Finally, this study focuses on a single primary task (GSM8K) for controlled grid experiments on two small
models, with additional full-set runs on GSM8K and HotpotQA that include \LlamaThreeBFull{} and \LlamaEightBFull{}.
Broader evaluation across tasks and larger models is left to future work but is not necessary to
validate the central mechanism. These limitations are inherent to a diagnostic perspective and
do not detract from the goal of identifying dynamically unstable reasoning failures.
\paragraph{When the signal weakens.}
The signal targets dynamically unstable failure modes and can be weaker when \emph{stable-but-wrong}
errors dominate. Concretely, on ReClor in our setting, higher instability tends to correlate with
correctness (AUC$_{\text{wrong}}<0.5$). The signal also weakens when the logged top-$k$ list is too
small to estimate entropy/JSD reliably (Appendix~\ref{sec:appendix-topk}).

\section{Conclusion}
We provide a simple and reproducible inference-time diagnostic for dynamic instability in LLM
reasoning. Instability strength yields a robust, monotonic relationship with failure
rate and predicts errors with AUC in the range 0.57 to 0.78 across GSM8K and HotpotQA, using
controlled GSM8K-300 runs and dense baseline runs at full-set scale.

Crucially, we show that not all instability is harmful: \emph{corrective} instability reflects
beneficial self-correction and is associated with higher accuracy, while \emph{destructive}
instability drives the failure-predictive signal.
Peak-step probes further indicate that corrective traces exhibit sharper margin drops, while
destructive traces exhibit higher support-set turnover.
We report these ``low-cost probes'' and a longer mechanistic interpretation in
Appendix~\ref{sec:appendix-mechanistic} and Appendix~\ref{sec:appendix-peak-probes}.
Taken together, these results establish instability strength and peak position as robust, black-box
diagnostics of reasoning breakdowns.

\section*{Impact Statement}

This work aims to advance the understanding of inference-time behavior in large language models by introducing a diagnostic signal for identifying dynamic instability during reasoning. 
The proposed method is purely observational, operates entirely at inference time, requires no model modification or retraining, and is intended as an analysis tool rather than a corrective or control mechanism.

Potential positive impacts include improved transparency and diagnostic assessment of LLM reasoning processes, which may support evaluation, debugging, and risk-aware deployment, particularly in settings where reasoning failures are costly. 
By distinguishing between corrective and destructive instability, the work also cautions against interpreting all forms of instability as uniformly harmful.

We emphasize that the method is not designed for influencing model behavior, automating decisions, or surveilling users, and it does not provide mechanisms for steering or modifying model outputs. 
Accordingly, we do not identify new ethical risks beyond those already associated with large language models. Any downstream use of inference-time diagnostics should be considered within appropriate task-specific and human-in-the-loop evaluation frameworks.

\bibliographystyle{icml2026}
\bibliography{refs}

\clearpage
\appendix
\section{Appendix: Additional Figures}
\label{sec:appendix-additional-figures}
\paragraph{Peak-timing visualization.}
\Cref{fig:corrective-destructive} visualizes the peak-timing effect reported in
\cref{sec:corrective-destructive}: earlier peak positions are more recoverable and are associated
with higher final-answer accuracy, while later peaks are associated with substantially lower
accuracy, consistent with the ``corrective vs.\ destructive'' interpretation.
\PeakPositionFigureAppendix
\paragraph{Failure-mode breakdown.}
\Cref{fig:failure_mode_distribution} provides a complementary view of error types in the same
GSM8K-300 setting. It summarizes the stable-but-wrong, early-collapse, and dynamically unstable
categories used in \cref{sec:appendix-additional-figures} and discussed in \cref{sec:appendix-mechanistic}.
\FailureModeFigureAppendix

\section{Appendix: Theoretical Details}
\label{sec:appendix-theory}
\FullTheoryDetails

\section{Appendix: Proofs}
\label{sec:appendix-proofs}
This appendix provides proofs for the theoretical statements in Appendix~\ref{sec:appendix-theory}.

\subsection{Preliminaries: Softmax shift invariance}
Softmax is invariant to adding constants: $\mathrm{softmax}(z)=\mathrm{softmax}(z+c\mathbf{1})$ for
any scalar $c$. We therefore measure logit differences on the identifiable subspace
$\mathbf{1}^\perp := \{v \in \mathbb{R}^V : \langle v,\mathbf{1}\rangle = 0\}$ using the projection
\begin{equation}
  \Pi := I - \frac{1}{V}\mathbf{1}\mathbf{1}^\top, \qquad \lVert u \rVert_{\perp} := \lVert \Pi u \rVert_2.
\end{equation}

\subsection{Proof of projected logit lower bound on Jensen--Shannon divergence}
\begin{proof}[Proof of Lemma~\ref{lem:logit-jsd}]
Fix consecutive logits $z$ and $z'$ with $p=\mathrm{softmax}(z)$ and $q=\mathrm{softmax}(z')$.
Define the line segment $z(s)=z'+s(z-z')$ for $s\in[0,1]$ and let $p(s)=\mathrm{softmax}(z(s))$.
Define
\begin{equation}
  \kappa_{t,\mathrm{traj}}
  :=
  \inf_{s\in[0,1]}
  \lambda_{\min}\!\Big(J(p(s))\Big|_{\mathbf{1}^\perp}\Big),
\end{equation}
where $J(p)=\mathrm{diag}(p)-pp^\top$ is the softmax Jacobian. By definition, for any
$v\in\mathbf{1}^\perp$ and any $s\in[0,1]$ we have $v^\top J(p(s))v \ge \kappa_{t,\mathrm{traj}}\lVert v\rVert_2^2$.
Let $m=\tfrac{1}{2}(p+q)$. By definition,
\begin{equation}
  \mathrm{JSD}(p,q) = \tfrac{1}{2}\mathrm{KL}(p\|m)+\tfrac{1}{2}\mathrm{KL}(q\|m).
\end{equation}
Pinsker's inequality (constants differ by convention) gives $\mathrm{KL}(p\|m) \ge 2\,\mathrm{TV}(p,m)^2$ and
$\mathrm{KL}(q\|m) \ge 2\,\mathrm{TV}(q,m)^2$, where $\mathrm{TV}(a,b)=\tfrac{1}{2}\lVert a-b\rVert_1$
(and we use natural logarithms). Any equivalent Pinsker constant convention would lead to the same
qualitative conclusion up to universal factors.
Since $p-m=\tfrac{1}{2}(p-q)$ and $q-m=\tfrac{1}{2}(q-p)$, we have
\begin{equation}
  \mathrm{JSD}(p,q) \ge \tfrac{1}{2}\mathrm{TV}(p,q)^2 = \tfrac{1}{8}\lVert p-q\rVert_1^2
  \ge \tfrac{1}{8}\lVert p-q\rVert_2^2.
\end{equation}

It remains to lower-bound $\lVert p-q\rVert_2$ by $\lVert z-z'\rVert_{\perp}$. Consider the path
$z(s)=z'+s(z-z')$ for $s\in[0,1]$ and define $p(s)=\mathrm{softmax}(z(s))$. Then
\begin{equation}
  \begin{aligned}
    p-q &= p(1)-p(0) \\
    &= \int_0^1 \frac{d}{ds}p(s)\,ds \\
    &= \int_0^1 J(p(s))(z-z')\,ds,
  \end{aligned}
\end{equation}
where $J(p)=\nabla_z\mathrm{softmax}(z)=\mathrm{diag}(p)-pp^\top$ is the softmax Jacobian.
Because $J(p)\mathbf{1}=0$, only the projected component matters:
$J(p(s))(z-z')=J(p(s))\Pi(z-z')$. Let $v:=\Pi(z-z')\in\mathbf{1}^\perp$. Then
\begin{equation}
  p-q = \int_0^1 J(p(s))v\,ds.
\end{equation}
Taking the inner product with $v$ and using symmetry of $J(p(s))$ yields
\begin{equation}
  \begin{aligned}
    \langle p-q, v\rangle
    &= \int_0^1 v^\top J(p(s))v\,ds \\
    &\ge \int_0^1 \kappa_{t,\mathrm{traj}} \lVert v\rVert_2^2\,ds \\
    &= \kappa_{t,\mathrm{traj}} \lVert v\rVert_2^2,
  \end{aligned}
\end{equation}
Since $J(p(s))\mathbf{1}=0$, we interpret $\lambda_{\min}(J(p(s))|_{\mathbf{1}^\perp})$ as the smallest
eigenvalue over unit vectors orthogonal to $\mathbf{1}$.
By Cauchy-Schwarz, $\lVert p-q\rVert_2 \lVert v\rVert_2 \ge \langle p-q, v\rangle$, hence
\begin{equation}
  \lVert p-q\rVert_2 \ge \kappa_{t,\mathrm{traj}} \lVert v\rVert_2 = \kappa_{t,\mathrm{traj}} \lVert z-z'\rVert_{\perp}.
\end{equation}
Combining the two bounds gives
\begin{equation}
  \mathrm{JSD}(p,q) \ge \tfrac{1}{8}\lVert p-q\rVert_2^2 \ge \tfrac{\kappa_{t,\mathrm{traj}}^2}{8}\lVert z-z'\rVert_{\perp}^2,
\end{equation}
as claimed.
\end{proof}

\subsection{Proof of the late correction penalty}
Let $\mathrm{Correct}$ denote the event that the final answer is correct at step $T$.

\begin{proof}[Proof of Theorem~\ref{thm:late-penalty}]
Let $t^\star$ be the step at which a corrective transition occurs, so that
$h_{t^\star}\in \mathcal{A}_G^{\mathrm{entry}}$. Assume $T-t^\star<\taumix$. By (S2), for any
realization of $h_{t^\star}\in\mathcal{A}_G^{\mathrm{entry}}$,
let $E$ denote the event $T-t^\star<\taumix$. Then
\begin{equation}
  \Pr(\mathrm{Correct}\mid h_{t^\star},\ E)\ \le\ \delta.
\end{equation}
Taking expectation over the randomness of $h_{t^\star}$ and applying the law of total probability,
\begin{equation}
  \begin{aligned}
    \Pr(\mathrm{Correct}\mid E)
    &= \mathbb{E}_{h_{t^\star}}\!\left[
      \Pr(\mathrm{Correct}\mid h_{t^\star},\ E)
    \right] \\
    &\le \delta.
  \end{aligned}
\end{equation}
This proves the late correction penalty. Conversely, if $h_{t^\star}\in\mathcal{A}_G$ and
$T-t^\star\ge\taumix$, then (S1) yields
\begin{equation}
  \begin{aligned}
    \Pr(\mathrm{Correct}\mid h_{t^\star}\in\mathcal{A}_G,\ T-t^\star\ge\taumix)
    &\ge 1-\delta.
  \end{aligned}
\end{equation}
\end{proof}

\section{Appendix: Detector Comparison}
\label{sec:appendix-detector}
The derivative-based change-point detector produces frequent detections, while CUSUM is much
more conservative under the current thresholds. In our experiments, the binary change-point
label is less predictive than continuous strength, reinforcing the decision to focus on
strength-based analyses in the main text.

\section{\texorpdfstring{Appendix: Top-$k$ Sensitivity}{Appendix: Top-k Sensitivity}}
\label{sec:appendix-topk}
To validate truncation effects, we recompute the instability strength $S=\max_t \It$ from the
same logged top-$50$ traces using smaller effective values of $k$ by truncating the stored list.
\Cref{tab:topk_sensitivity} shows that the diagnostic signal weakens for very small $k$ but is
qualitatively stable once $k$ is moderately large.

\begin{table*}[t]
\caption{Top-$k$ sensitivity on \LlamaFull{} greedy (GSM8K, 300 traces). Larger $k$ yields a more
stable estimate of entropy/JSD and slightly stronger separability; the overall trend remains.}%
\centering
\begin{adjustbox}{max width=\textwidth}
\begin{tabular}{c c c}
\toprule
Effective $k$ & AUC$_{\text{wrong}}$ & Spearman \\
\midrule
10 & 0.626 & $-0.144$ \\
20 & 0.635 & $-0.154$ \\
50 & 0.657 & $-0.178$ \\
\bottomrule
\end{tabular}
\end{adjustbox}
\label{tab:topk_sensitivity}
\end{table*}

\paragraph{Top-$k$ sensitivity across models.}
As an additional check, we recompute the diagnostic from full-set GSM8K baseline runs while varying the diagnostic truncation level $k$.
\Cref{tab:topk_multimodel} shows that AUC$_{\text{wrong}}$ and rank association change only slightly between $k\in\{20,50,100\}$ across all evaluated models, supporting that our conclusions are not artifacts of a particular top-$k$ choice.
\begin{table*}[t]
\caption{Diagnostic top-$k$ sensitivity across models on GSM8K (full test set, $N{=}1319$).
We recompute $S=\max_t I_t$ using different diagnostic truncation levels $k$ and report separability.
All quantities are computed from the logged top-$k$ distributions at inference time.
Within each model block, shaded cells indicate the best value across diagnostic $k$ for that metric (higher AUC; more negative Spearman).}
\centering
\small
\setlength{\tabcolsep}{3pt}
\begin{adjustbox}{max width=\textwidth}
\begin{tabular}{@{}l c c c c@{}}
\toprule
Model & diag $k$ & Acc & AUC$_{\text{wrong}}(S)$ & Spearman$(S)$ \\
\midrule

\LlamaOneBInst{} & 20 & 0.419 & 0.653 & -0.262 \\
 & 50 & 0.419 & 0.662 & -0.277 \\
 & 100 & 0.419 & \cellcolor{gray!12}0.665 & \cellcolor{gray!12}-0.283 \\
\addlinespace
\LlamaThreeBInst{} & 20 & 0.707 & 0.691 & -0.300 \\
 & 50 & 0.707 & 0.698 & -0.312 \\
 & 100 & 0.707 & \cellcolor{gray!12}0.700 & \cellcolor{gray!12}-0.315 \\
\addlinespace
\LlamaEightBInst{} & 20 & 0.749 & 0.684 & -0.276 \\
 & 50 & 0.749 & 0.689 & -0.284 \\
 & 100 & 0.749 & \cellcolor{gray!12}0.691 & \cellcolor{gray!12}-0.286 \\
\addlinespace
\QwenHalfBInst{} & 20 & 0.301 & \cellcolor{gray!12}0.717 & \cellcolor{gray!12}-0.344 \\
 & 50 & 0.301 & 0.716 & -0.343 \\
 & 100 & 0.301 & 0.715 & -0.341 \\
\addlinespace
\QwenTwoFiveInst{} & 20 & 0.416 & 0.639 & -0.237 \\
 & 50 & 0.416 & 0.641 & -0.241 \\
 & 100 & 0.416 & \cellcolor{gray!12}0.642 & \cellcolor{gray!12}-0.242 \\
\addlinespace
\QwenTwoFiveSevenBInst{} & 20 & 0.500 & 0.642 & -0.245 \\
 & 50 & 0.500 & 0.644 & -0.249 \\
 & 100 & 0.500 & \cellcolor{gray!12}0.644 & \cellcolor{gray!12}-0.249 \\
\addlinespace
\bottomrule
\end{tabular}
\end{adjustbox}
\label{tab:topk_multimodel}
\end{table*}

\section{Appendix: Early-Window Separability}
\label{sec:appendix-early-window}
To further control for length confounds, we compute $S_{50}=\max_{t\le 50}\It$, the max instability
within the first 50 steps, and evaluate separability using the same AUC$_{\text{wrong}}$ metric as
in \cref{tab:robustness}. The signal remains predictive under this early-window restriction
(\cref{tab:early_window_auc}).

\EarlyWarningFigureAppendix

\begin{table*}[t]
\caption{Early-window separability using $S_{50}=\max_{t\le 50}\It$ (GSM8K, 300 traces). AUC remains above
chance under the fixed early-window restriction.}%
\centering
\begin{adjustbox}{max width=\textwidth}
\begin{tabular}{l l c c}
\toprule
Model & Decoding & AUC$_{\text{wrong}}(S)$ & AUC$_{\text{wrong}}(S_{50})$ \\
\midrule
\QwenTwoFiveInst{} & greedy & 0.681 & 0.605 \\
\LlamaOneBInst{} & $\temp=0.0$ & 0.657 & 0.665 \\
\bottomrule
\end{tabular}
\end{adjustbox}
\label{tab:early_window_auc}
\end{table*}

\section{Appendix: Negative Controls and Alternative Baselines}
\label{sec:appendix-controls}
This appendix reports two reviewer-oriented controls on GSM8K (300 examples) using the same logged
traces as the main results for \LlamaFull{} greedy. First, we include
time-structure negative controls by randomizing the temporal order and recomputing strength
statistics. Second, we compare against simple entropy-family baselines computed from the same
per-step observables without rerunning the models.

\subsection{Time-Structure Negative Controls}
Because the peak-based strength statistic $S=\max_t \It$ is invariant to permutations of the
sequence values, time-structure controls are most informative for timing-aware or windowed
statistics (e.g., $S_w$ and peak position $\rho$), rather than for $S$ itself.
We consider two temporal randomizations. (1) \textbf{Shuffle $\{p_t\}$}: for each trace we
randomly permute the per-step logged top-$k$ distributions before recomputing $H_t$, $D_t$, and
$\It$. (2) \textbf{Shuffle $\{\It\}$}: for each trace we compute the original $\It$ sequence and
then randomly permute it. Since the main strength statistic $S=\max_t \It$ is invariant to
permutations of $\It$, this control primarily targets windowed statistics such as $S_{50}$.
All shuffles are deterministic per example identifier.

\begin{table*}[t]
\caption{Time-structure negative controls on GSM8K (300 examples) for greedy decoding.
Bucket slope is measured as the accuracy difference between bucket B5 and B1.
Shuffling $\{\It\}$ does not affect $S=\max_t \It$ by definition.}%
\centering
\small
\setlength{\tabcolsep}{4pt}
\begin{adjustbox}{max width=\textwidth}
\begin{tabular}{l l c c c}
\toprule
Model & Setting & \shortstack{AUC$_{\text{wrong}}$\\$(S)$} & \shortstack{Spearman\\$(S)$} & \shortstack{B5$-$B1\\$(S)$} \\
\midrule
\LlamaFull{} & original & 0.657 & $-0.178$ & $-0.183$ \\
\LlamaFull{} & shuffle $\{p_t\}$ & 0.652 & $-0.173$ & $-0.150$ \\
\LlamaFull{} & shuffle $\{\It\}$ & 0.657 & $-0.178$ & $-0.183$ \\
\bottomrule
\end{tabular}
\end{adjustbox}
\label{tab:time_shuffle_controls_s}
\end{table*}

\begin{table*}[t]
\caption{Time-structure negative controls on GSM8K (300 examples) for greedy decoding, evaluated with the
early-window statistic $S_{50}=\max_{t\le 50} \It$.
Shuffling reduces early-window separability, consistent with the role of temporal organization in
windowed diagnostics.}%
\centering
\small
\setlength{\tabcolsep}{4pt}
\begin{adjustbox}{max width=\textwidth}
\begin{tabular}{l l c c c}
\toprule
Model & Setting & \shortstack{AUC$_{\text{wrong}}$\\$(S_{50})$} & \shortstack{Spearman\\$(S_{50})$} & \shortstack{B5$-$B1\\$(S_{50})$} \\
\midrule
\LlamaFull{} & original & 0.664 & $-0.187$ & $-0.167$ \\
\LlamaFull{} & shuffle $\{p_t\}$ & 0.641 & $-0.161$ & $-0.133$ \\
\LlamaFull{} & shuffle $\{\It\}$ & 0.617 & $-0.134$ & $-0.117$ \\
\bottomrule
\end{tabular}
\end{adjustbox}
\label{tab:time_shuffle_controls_s50}
\end{table*}

\subsection{Entropy-Family Baselines}
We compare our main strength statistic $S_I=\max_t \It$ against three simple alternatives computed
from the same traces: $S_H=\max_t H_t$ (max entropy), $S_{\Delta H}=\max_t |H_t-H_{t-1}|$ (max entropy change),
and $S_D=\max_t D_t$ (max JSD). We also report the early-window variants obtained by restricting
the max to the first 50 steps.
We report both full-trace and fixed early-window variants below.

\EntropyFamilyBaselinesTable

\subsection{Signal Ablation (Additional)}
\LambdaAblationFigure

\begin{table*}[t]
\caption{Entropy-family baselines on GSM8K (300 examples) for greedy decoding, evaluated in the fixed
early window (first 50 steps).}%
\centering
\small
\setlength{\tabcolsep}{4pt}
\begin{adjustbox}{max width=\textwidth}
\begin{tabular}{l l c c}
\toprule
Model & Statistic & AUC$_{\text{wrong}}(S_{50})$ & Spearman$(S_{50})$ \\
\midrule
\LlamaFull{} & $S_{50,I}=\max_{t\le 50} \It$ & 0.664 & $-0.187$ \\
\LlamaFull{} & $S_{50,H}=\max_{t\le 50} H_t$ & 0.648 & $-0.169$ \\
\LlamaFull{} & $S_{50,\Delta H}=\max_{t\le 50} |H_t-H_{t-1}|$ & 0.544 & $-0.050$ \\
\LlamaFull{} & $S_{50,D}=\max_{t\le 50} D_t$ & 0.418 & $0.140$ \\
\bottomrule
\end{tabular}
\end{adjustbox}
\label{tab:entropy_family_baselines_s50}
\end{table*}

\section{Appendix: Bootstrap Confidence Intervals}
\label{sec:appendix-bootstrap}
To quantify uncertainty under the GSM8K 300-example setting, we compute percentile bootstrap
confidence intervals over examples (1000 resamples) for AUC$_{\text{wrong}}$ and bucketed accuracy.
\Cref{tab:bootstrap_auc_ci} reports AUC confidence intervals for $S$ and $S_{50}$, and
\Cref{fig:bootstrap_bucket_trends} visualizes bucket accuracy with 95\% confidence intervals.

\begin{table*}[t]
\caption{Bootstrap confidence intervals for AUC$_{\text{wrong}}$ on GSM8K (300 examples).
Intervals are percentile 95\% confidence intervals over 1000 bootstrap resamples.}%
\centering
\small
\setlength{\tabcolsep}{3pt}
\begin{adjustbox}{max width=\textwidth}
\begin{tabular}{l l c c}
\toprule
Model & Decoding & \shortstack{AUC$_{\text{wrong}}$\\$(S)$\\{[95\% CI]}} & \shortstack{AUC$_{\text{wrong}}$\\$(S_{50})$\\{[95\% CI]}} \\
\midrule
\LlamaFull{} & $\temp=0.0$ & 0.657 [0.553, 0.756] & 0.664 [0.545, 0.769] \\
\LlamaFull{} & $\temp=0.3$ & 0.667 [0.571, 0.757] & 0.679 [0.584, 0.772] \\
\LlamaFull{} & $\temp=0.7$ & 0.741 [0.636, 0.834] & 0.721 [0.614, 0.808] \\
\bottomrule
\end{tabular}
\end{adjustbox}
\label{tab:bootstrap_auc_ci}
\end{table*}

\begin{figure}[t]
\centering
\includegraphics[width=\linewidth]{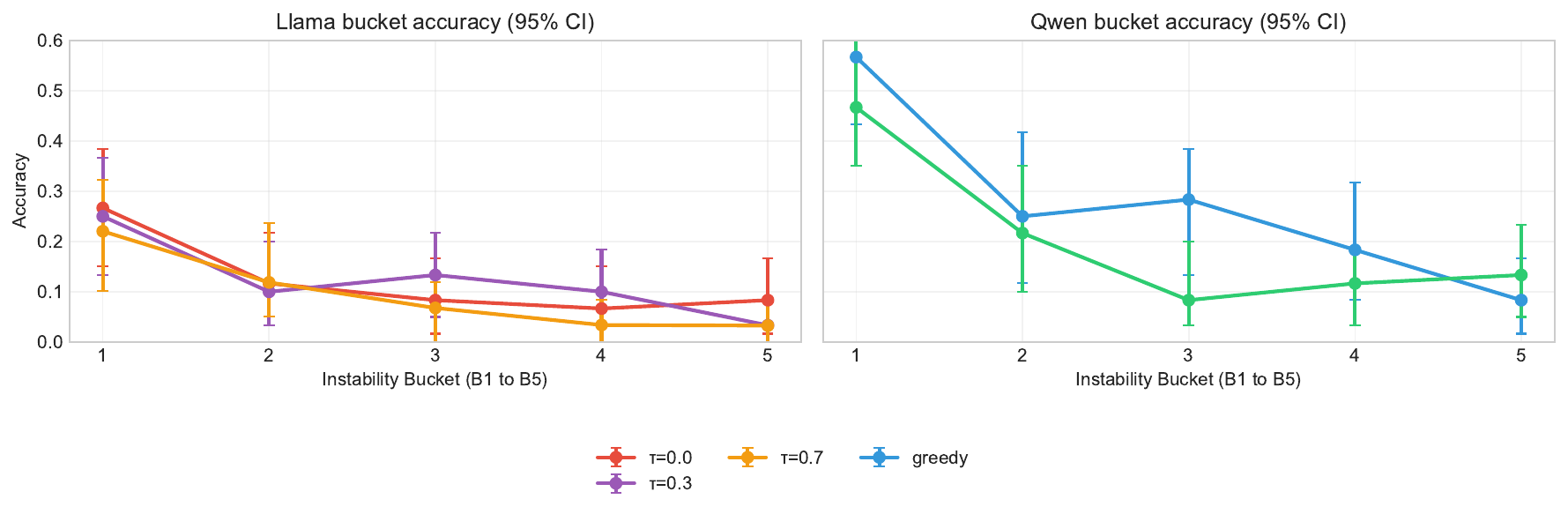}
\caption{Bootstrap confidence intervals for bucketed accuracy on GSM8K (300 examples).
Each curve plots accuracy across five equal-sized instability buckets (B1 to B5), with 95\%
bootstrap confidence intervals over examples (1000 resamples).}%
\label{fig:bootstrap_bucket_trends}
\end{figure}

\section{Appendix: Peak Threshold Sensitivity}
\label{sec:appendix-rho-thresholds}
We evaluate the robustness of the peak-timing interpretation by sweeping early and late thresholds
for the peak position $\rho=t^\star/T$ on the 100-trace full-vocabulary baseline run used in
\Cref{tab:corrective-destructive}. \Cref{tab:rho_threshold_sweep} shows that early-peak traces
consistently achieve higher accuracy than late-peak traces across a small threshold grid. We also
plot accuracy as a function of peak position in 10 equal-width bins (\cref{fig:rho_accuracy_bins}).

\begin{table*}[t]
\caption{Peak-timing threshold sweep on the 100-trace full-vocabulary baseline run.
Gap denotes Acc$_{\text{early}}$ minus Acc$_{\text{late}}$.}%
\centering
\small
\setlength{\tabcolsep}{3pt}
\begin{adjustbox}{max width=\textwidth}
\begin{tabular}{@{}c c r c r c c@{}}
\toprule
Early $\rho$ & Late $\rho$ & $N_{\text{early}}$ & Acc$_{\text{early}}$ & $N_{\text{late}}$ & Acc$_{\text{late}}$ & Gap \\
\midrule
0.20 & 0.45 & 39 & 0.538 & 19 & 0.211 & 0.328 \\
0.20 & 0.50 & 39 & 0.538 & 14 & 0.143 & 0.396 \\
0.20 & 0.60 & 39 & 0.538 & 8 & 0.125 & 0.413 \\
\midrule
0.25 & 0.45 & 57 & 0.456 & 19 & 0.211 & 0.246 \\
0.25 & 0.50 & 57 & 0.456 & 14 & 0.143 & 0.313 \\
0.25 & 0.60 & 57 & 0.456 & 8 & 0.125 & 0.331 \\
\midrule
0.30 & 0.45 & 68 & 0.441 & 19 & 0.211 & 0.231 \\
0.30 & 0.50 & 68 & 0.441 & 14 & 0.143 & 0.298 \\
0.30 & 0.60 & 68 & 0.441 & 8 & 0.125 & 0.316 \\
\bottomrule
\end{tabular}
\end{adjustbox}
\label{tab:rho_threshold_sweep}
\end{table*}

\begin{figure}[t]
\centering
\includegraphics[width=0.85\linewidth]{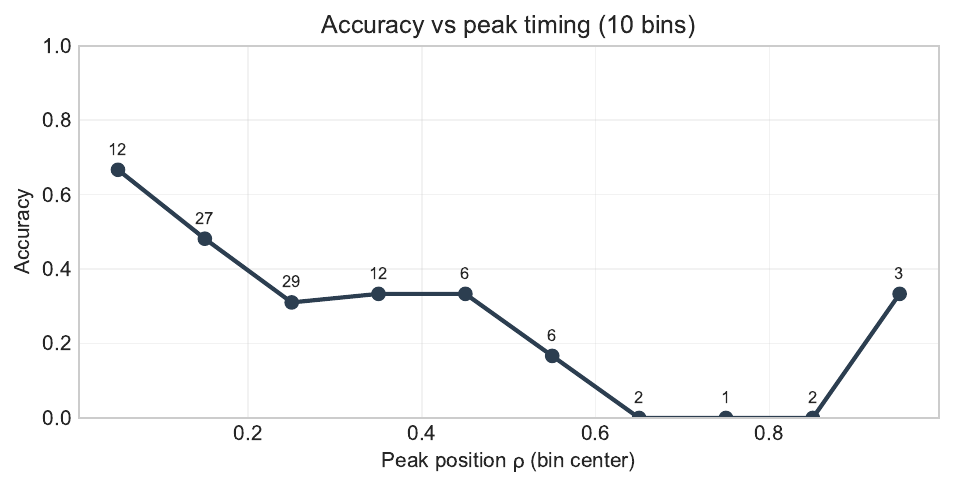}
\caption{Accuracy by peak position $\rho=t^\star/T$ in 10 equal-width bins on the 100-trace
full-vocabulary baseline run. Bin counts are annotated above each point.}%
\label{fig:rho_accuracy_bins}
\end{figure}

\section{Appendix: Mechanistic Interpretation}
\label{sec:appendix-mechanistic}
The discussion in the main text provides a brief, non-committal mechanistic interpretation of the
signal. Here we provide a longer interpretation, emphasizing that these abstractions do not imply
a unique internal mechanism.

\paragraph{Closed-loop amplification.}
Autoregressive decoding is a feedback system: the emitted token becomes part of the next input,
updating the residual stream and attention context, and therefore the next-token distribution. In
this view, regime shifts correspond to discrete reconfigurations of the effective computation
performed by attention and MLP blocks, which can amplify local sensitivity in the state update map.

\paragraph{Decision fragility and near ties.}
Our instability signal combines realized distributional change with uncertainty. When top candidates
are close in score, small state changes can flip the argmax. In a Transformer, such near ties can
arise when multiple partial solutions remain compatible with the prefix, leading to competing
attention patterns and feature activations. In this regime, entropy is a proxy for proximity to a
decision boundary, and it complements JSD when divergence estimates are compressed under top-$k$
logging.

\paragraph{Route switching and local curvature.}
Large distributional shifts can reflect route switching, where attention reallocates from one subset
of context tokens to another as the model commits to an intermediate value or revises an earlier
step. Prior work suggests that Transformers can implement discrete, circuit-like behaviors in
context~\cite{olsson2022incontext_induction}, which is compatible with abrupt changes in the
effective computation and observed JSD spikes.

\paragraph{Basins and stabilization time.}
The basin sets $\mathcal{A}_G$ and $\mathcal{A}_B$ and the stabilization time $\taumix$ provide a compact
way to state the recoverability intuition behind corrective versus destructive instability. In a
Transformer, these basins can be interpreted as regions of residual-stream activation space that
reliably produce continuations consistent with a correct or incorrect solution. The stabilization
time can be interpreted as the number of subsequent steps required for the model to propagate a
corrected intermediate value through the remaining reasoning and final-answer formatting, which
explains why late instability peaks are less recoverable.

\section{Appendix: Peak-Step Probes}
\label{sec:appendix-peak-probes}
To probe differences at the peak step $t^\star=\arg\max_t \It$, we analyze two additional metrics:
margin drops and support-set turnover.

\paragraph{(Log-)probability margin drop.}
Let $\Delta_t = \log \tilde{p}_t^{(1)} - \log \tilde{p}_t^{(2)}$ be the gap between the top-2
\emph{log probabilities} at step $t$ under the renormalized top-$k$ distribution $\tilde{p}_t$.
Here $\tilde{p}_t^{(1)}$ and $\tilde{p}_t^{(2)}$ denote the largest and second-largest probabilities under $\tilde{p}_t$ (ranked values), not powers.
We measure the \emph{margin drop} $\Delta_{t^\star-1} - \Delta_{t^\star}$ as a proxy for how abruptly
the model transitions from a locally dominant next-token preference to a more competitive state.
We use log probabilities under $\tilde{p}_t$ as a black-box proxy; analyzing raw logit margins
would require white-box access but is conceptually aligned with the same mechanistic interpretation.

\paragraph{Support-set turnover.}
Let $\mathcal{T}_t$ be the top-10 token set at step $t$. We compute the Jaccard \emph{overlap}
$J = |\mathcal{T}_{t^\star} \cap \mathcal{T}_{t^\star-1}| / |\mathcal{T}_{t^\star} \cup \mathcal{T}_{t^\star-1}|$,
and define turnover as $1-J$.
We use top-10 here to focus on the most competitive candidates at the peak step, while $\tilde{p}_t$
in our main signal uses top-$k$ (typically $k=50$) for a more stable entropy/JSD estimate.
High turnover implies low consecutive-step support overlap, which contributes directly to large
JSD spikes under truncated distributions.

\begin{table*}[t]
\caption{Peak-step characteristics for correct vs.\ wrong traces (\LlamaFull, GSM8K).
We aggregate 300 problems across three decoding settings (greedy; and stochastic sampling with
$\temp \in \{0.3,0.7\}$, top-$p$ 0.9), yielding 900 traces.
Correct traces show larger margin drops (decisive transition); wrong traces show higher turnover
(chaotic support-set change). (For per-setting robustness across this grid, see
\Cref{tab:robustness}; here we pool settings for mechanistic characterization.)}%
\centering
\small
\setlength{\tabcolsep}{3pt}
\begin{adjustbox}{max width=\textwidth}
\begin{tabular}{@{}l c c c c@{}}
\toprule
& \shortstack{Margin\\at $t^\star$} & \shortstack{Margin\\drop} & \shortstack{Jaccard\\overlap} & \shortstack{Turnover\\$(1-J)$} \\
\midrule
Correct & 0.30 & \textbf{1.82} & 0.18 & 0.82 \\
Wrong   & 0.45 & 1.05 & 0.16 & \textbf{0.84} \\
\bottomrule
\end{tabular}
\end{adjustbox}
\label{tab:spike-characteristics}
\end{table*}

\Cref{tab:spike-characteristics} reveals a key distinction: \textbf{correct traces exhibit
larger margin drops} (1.82 vs.\ 1.05) at the peak step $t^\star$, while \textbf{wrong traces
exhibit higher support-set turnover} (84\% vs.\ 82\%).
We treat these metrics as descriptive characterizations; some effect sizes (e.g., turnover) are modest,
but the patterns are consistent under pooling across decoding settings.

\section{Appendix: Additional Full-Set Runs}
\label{sec:appendix-fullset}
To complement the main analyses, we report dense per-step baseline runs for six models on the
full GSM8K test set (1319 examples) and on the HotpotQA distractor validation split (7405
examples). For each example we compute
$S=\max_t \It$ from the logged per-step instability and evaluate separability via AUC$_{\text{wrong}}$.
We also report $S_{50}=\max_{t\le 50}\It$ as an early-window control, and bucket accuracy by five
equal-sized quantile buckets of $S$. HotpotQA correctness uses a short-answer match on the first
line of the generated output after normalization, with a containment check against the reference.
The results of these full-set runs are summarized below.

\FullSetRunsAppendix

\section{Appendix: Peak Position in a Fixed Early Window}
\label{sec:appendix-fixed-window}
To control for time-budget confounds in the ``corrective vs.\ destructive'' analysis, we compute
peak position within a fixed early window (first 50 steps). Let
$t^\star_{50}=\arg\max_{t\le 50}\It$ (breaking ties by the smallest $t$) and $\rho_{50}=t^\star_{50}/50$ on the same held-out baseline run as
\Cref{tab:corrective-destructive}. The qualitative ordering remains: later peak positions
\emph{within the same window} correspond to lower accuracy (\cref{tab:fixed_window_peak}).

\begin{table*}[t]
\caption{Accuracy by peak position within a fixed early window (same 100-trace full vocabulary baseline as
\Cref{tab:corrective-destructive}). Even when peak position is defined within the first 50 steps,
later peaks are associated with lower accuracy.}%
\centering
\begin{adjustbox}{max width=\textwidth}
\begin{tabular}{l c c}
\toprule
Peak Position (first 50 steps) & \% of Traces & Accuracy \\
\midrule
Early ($\rho_{50} < 0.25$) & 13\% & 0.54 \\
Middle ($0.25 \le \rho_{50} \le 0.5$) & 34\% & 0.47 \\
Late ($\rho_{50} > 0.5$) & 53\% & 0.28 \\
\bottomrule
\end{tabular}
\end{adjustbox}
\label{tab:fixed_window_peak}
\end{table*}

\section{Appendix: Reproducibility and Implementation Details}
All results are generated via the project scripts in this repository with fixed configuration
files for dataset, model, decoding settings, and logging parameters. The pipeline consists of:
(i) generating traces by logging per-step top-$k$ token log probabilities during decoding,
(ii) computing $H_t$, $D_t$, and $\It$ from the renormalized top-$k$ distributions as described in
the Method section, and (iii) aggregating per-trace statistics ($S$ and early-window variants),
bucket summaries, and figures used in the paper. Unless otherwise noted, we use the GSM8K test
split with 300 problems, log top-$50$ tokens per-step, and cap generation at 128 new tokens.
The full-set runs use the GSM8K test split (1319 examples) and the HotpotQA distractor validation
split (7405 examples), evaluated with the same per-trace metrics and bucket procedure.
For stochastic decoding we use temperature $\temp=0.7$ with nucleus sampling ($p=0.9$) and a fixed
random seed; for \LlamaFull{} we additionally run $\temp=0.3$ under the same $p$ and seed. We also
include a small full vocabulary validation run for key timing claims (\cref{sec:corrective-destructive}).

\end{document}